
\documentclass[dvipsnames]{article}

\usepackage{microtype}
\usepackage{graphicx}
\usepackage{subfigure}
\usepackage{booktabs} 

\usepackage{hyperref}
\usepackage{natbib}



\usepackage[accepted]{icml2024}

\usepackage{amsmath}
\usepackage{amssymb}
\usepackage{mathtools}
\usepackage{amsthm}

\usepackage[capitalize,noabbrev]{cleveref}

\theoremstyle{plain}
\newtheorem{theorem}{Theorem}[section]
\newtheorem{proposition}[theorem]{Proposition}
\newtheorem{lemma}[theorem]{Lemma}
\newtheorem{corollary}[theorem]{Corollary}

\theoremstyle{definition}
\newtheorem{definition}[theorem]{Definition}

\theoremstyle{remark}
\newtheorem*{remark}{Remark}

\usepackage[textsize=tiny]{todonotes}

\usepackage{multirow,threeparttable,makecell,colortbl,caption,float}
\usepackage{dsfont}
\usepackage{annotate-equations}
\usepackage[skins,listings]{tcolorbox}

\usepackage{xcolor}
\usepackage[vlined,boxed,ruled,algo2e]{algorithm2e}

\newcommand{\red}[1]{\textcolor{BrickRed}{#1}}
\newcommand{\blue}[1]{\textcolor{NavyBlue}{#1}}
\newcommand{\purple}[1]{\textcolor{DarkOrchid}{#1}}

\newcommand{\update}[1]{\textcolor{black}{#1}}


\begin{document}

\twocolumn[
\icmltitle{How Graph Neural Networks Learn: Lessons from Training Dynamics}



\icmlsetsymbol{equal}{*}

\begin{icmlauthorlist}
\icmlauthor{Chenxiao Yang}{yyy,equal}
\icmlauthor{Qitian Wu}{yyy}
\icmlauthor{David Wipf}{comp}
\icmlauthor{Ruoyu Sun}{sch1,sch2}
\icmlauthor{Junchi Yan}{yyy}
\end{icmlauthorlist}

\icmlaffiliation{yyy}{School of Artificial Intelligence \&  Department of Computer Science and Engineering \& MoE Lab of AI, Shanghai Jiao Tong University}
\icmlaffiliation{comp}{Amazon Web Services}
\icmlaffiliation{sch1}{School of Data Science, The Chinese University of Hong Kong, Shenzhen}
\icmlaffiliation{sch2}{Shenzhen International Center for Industrial and Applied Mathematics, Shenzhen Research Institute of Big Data}

\icmlcorrespondingauthor{Junchi Yan}{yanjunchi@sjtu.edu.cn}

\icmlkeywords{Machine Learning, Graph Neural Networks, Optimization}

\vskip 0.3in]



\printAffiliationsAndNotice{$^*$ Work was partially done during an internship at Amazon Web Services.}  

\begin{abstract}
A long-standing goal in deep learning has been to characterize the learning behavior of black-box models in a more interpretable manner. For graph neural networks (GNNs), considerable advances have been made in formalizing what functions they can represent, but whether GNNs will learn desired functions during the optimization process remains less clear. To fill this gap, we study their training dynamics in function space. In particular, we find that the gradient descent optimization of GNNs implicitly leverages the graph structure to update the learned function, as can be quantified by a phenomenon which we call \emph{kernel-graph alignment}. We provide theoretical explanations for the emergence of this phenomenon in the overparameterized regime and empirically validate it on real-world GNNs. This finding offers new interpretable insights into when and why the learned GNN functions generalize, highlighting their limitations in heterophilic graphs. Practically, we propose a parameter-free algorithm that directly uses a sparse matrix (i.e. graph adjacency) to update the learned function. We demonstrate that this embarrassingly simple approach can be as effective as GNNs while being orders-of-magnitude faster.
\end{abstract}

\section{Introduction}
\emph{Graph Neural Networks (GNNs)}~\citep{gori2005new,scarselli2008graph,bruna2014spectral,kipf2016semi} represent network architectures for learning on entities with relations and interactions. In addition to their empirical success, the pursuit of theoretical understanding has also led researchers to dissect GNNs in terms of their representation powers (a.k.a. expressiveness)~\citep{maron2019provably,xu2018powerful,oono2019graph,chen2019equivalence}, which aim to answer what families of functions GNNs can represent or approximate. However, beyond these studies, it remains unclear whether GNNs will indeed learn the desired function during the training process. Filling this gap calls for more attention to the optimization of GNNs and effects of graph structures that implicitly bias the learning process towards certain solutions. In this paper, we analyze the training dynamics of GNNs in function space, aiming to answer

    \textit{Do GNNs indeed learn the desired function that can generalize during the training process, and if so, how?}

\begin{figure*}[t]
  \centering
  \subfigure[Training dynamics]{\hspace{-25pt}\includegraphics[width=0.69\linewidth]{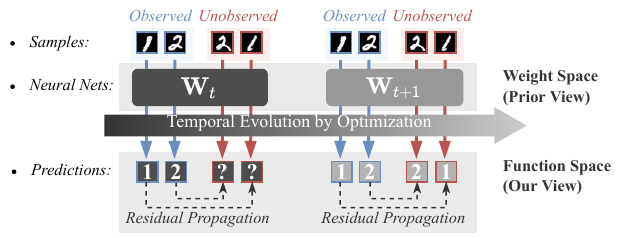}}
  \subfigure[Alignment of matrices]{\includegraphics[width=0.29\linewidth]{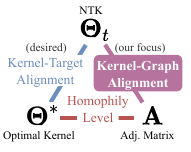}}
  \caption{\textbf{(a)} Training dynamics of GNNs in function space where residuals (i.e. difference between labels and predictions) propagate from observed to unobserved samples based on a kernel similarity measure.  \textbf{(b)} The kernel matrix $\mathbf \Theta_t$ naturally aligns with the adjacency matrix $\mathbf A$, which is favorable for generalization if $\mathbf A$ is inherently close to the optimal kernel $\mathbf \Theta^*$.}\label{fig_intro}
\end{figure*}

We note that the exact mathematical characterization of the optimization of neural networks, even for shallow ones, is prohibitive or even impossible due to their non-linearity. Existing work thus resorts to simplifications, such as removing activations (e.g., \citet{xu2021optimization} for GNNs) or training a single neuron (e.g., \citet{bai2019beyond,frei2020agnostic}). Moreover, even with these simplifications, there is generally no single straightforward answer to the above question; only insights from different angles exist for illuminating certain mechanisms in this complex process (e.g., \citet{cao2019towards,papyan2020prevalence,he2019local}). In this work, we provide initial answers to this open question by studying the role of graph structures in the training process. Particularly, we have the following informal statement:

\textit{The optimization of GNNs implicitly leverages the graph structure, as is quantified by kernel-graph alignment, to learn functions that can generalize.}

More formally, the so-called \emph{Kernel-Graph Alignment} characterizes such a graph implicit bias during training, where the so-called \emph{Neural Tangent Kernel (NTK)}~\citep{jacot2018neural} that controls the evolution of the learned GNN function tends to align with the message passing matrix (typically implemented as graph adjacency) used in GNNs' forward pass. This result is not only useful for understanding the optimization of GNNs but also allows us to explain a series of previously unresolved theoretical questions and develop practically useful algorithms in a principled manner. Our contributions are summarized as follows:

\textbf{Warm-Up Example: Residual Propagation.}\quad As a warm-up example, we propose a class of learning algorithms that replace the NTK matrix in the training dynamics with a sparse adjacency matrix, representing an extreme case where the kernel perfectly aligns with the graph. We dub this method \emph{Residual Propagation} since it simultaneously minimizes the loss and yields predictions by purely propagating residuals (a.k.a. errors) without the need for trainable parameters or back-propagation. The algorithm has interesting connections with two classic non-parametric algorithms: label propagation~\citep{zhou2003learning} and kernel regression~\citep{shawe2004kernel}. Strong empirical evidence across 15 benchmarks, including three OGBN datasets, demonstrates that this embarrassingly simple method has surprisingly good generalization performance, competing with non-linear GNNs. Additionally, compared with GNNs, it offers up to thousands of times speedup and requires ten times less memory due to its linear complexity. This algorithm serves as a simple example illustrating the inner mechanism of GNNs' optimization. (\textbf{Section~\ref{sec_mpperspective}})

\textbf{Implicit Leverage of Graphs in Optimization.}\quad Next, we formally study how the graph structure is leveraged in the optimization of GNNs. To make the analysis tractable, we examine the training dynamics of GNNs in the overparameterized regime~\citep{jacot2018neural,lee2019wide}, where the model asymptotically converges to its first-order Taylor expansion around its initialization as an approximation. Despite this approximation, the message passing modules and non-linearities in the model, which are critical for the success of GNNs, are preserved. A rich body of literature (e.g. see \citet{golikov2022neural} and references therein) also conduct analysis in this regime for gaining practical insights into otherwise prohibitive problems. Using this approach, we provide explicit mathematical formulas for recurrently computing the NTK of GNNs (which we call node-level GNTK) with arbitrary model depth and inputs. The formula demonstrates how the graph structure is naturally integrated into the kernel function, inducing the kernel-graph alignment phenomenon. Additional examples of shallow (two- and single-layer) GNNs with fixed inputs further show cases where the kernel function becomes equivalent to special forms of adjacency. (\textbf{Section~\ref{sec_gnndynamics}})

\textbf{Generalization and Failure Modes.}\quad Stepping further, we provide interpretable explanations of how the graph implicit bias in optimization leads to generalization of the learned GNN function, and why GNNs struggle with certain graph learning problems such as heterophily. As illustrated in Fig.~\ref{fig_intro}(b), we introduce another matrix called the optimal kernel matrix, which denotes if a pair of instances share the same label. The alignment of this matrix with the adjacency matrix quantifies the homophily level of the~\citep{zhu2020beyond}, a data characteristic that has been empirically shown to be highly relevant to GNNs' generalization performance. Intuitively, for homophilic graphs, a larger homophily level with good kernel-graph alignment indicates that the kernel function approaches the optimal one (the so-called kernel-target alignment~\citep{cristianini2001kernel}); this is desired for favorable generalization, as labels of training instances tend to flow to testing instances with the same label during optimization. Theoretically, we establish a strong correlation between generalization and homophily by deriving a data-dependent generalization bound that highly depends on the graph heterophily, and also showing that GNNs are the Bayesian optimal prior model architecture that minimizes the population risk on homophilic graphs. during optimization on datasets with diverse characteristics. (\textbf{Section~\ref{sec5}})

\textbf{Empirical Verification.}\quad To further verify the theory, we numerically study the evolution of real-world GNNs during the GD-based training process on synthetic and real-world datasets. We found that their NTKs indeed align with the message passing matrix used in the forward pass. On homophilic graphs, alignment with the graph promotes alignment with the optimal kernel matrix and thus improves generalization, whereas on heterophilic graphs, it adversely affects generalization. Additionally, we observed that GNNs are capable of gradually adapting themselves to align with the optimal kernel regardless of different graph structures, which might be of independent interest for understanding feature learning in GNNs. (\textbf{Section~\ref{sec_exp4theory}})


Finally, we note that our analysis may only explain a limited part of an under-explored complex problem, and hope our analytical framework to pave the way for dissecting other graph learning tasks (such as link prediction and representation learning), and to be used to gain insights into, and methodology to solve other practically relevant GNN issues. We conclude our paper by discussing such possibilities and more related works. Our codes are available at \url{https://github.com/chr26195/ResidualPropagation}. (\textbf{Section~\ref{sec_discussion}})

\section{Preliminary} \label{sec_preliminary}

\textbf{Notation and Setup.}\quad Given a training set with $n_l$ labeled instances $\mathbf X = \{\boldsymbol x_i\}_{i=1}^{n_l} \in \mathbb R^{n_l\times d}$ and $\mathbf Y = \{y_i\}_{i=1}^{n_l} \in \mathbb R^{n_l}$, we aim to learn a predictive function $f(\boldsymbol x)$ parameterized by weights $\mathbf W$. For (semi-)supervised learning, we minimize the squared loss $\mathcal L$ using \emph{Gradient Descent (GD)},
\begin{equation} \label{eqn_contiGD}
\mathcal L = \Vert \mathbf F_t - \mathbf Y \Vert^2/2, \quad {\partial \mathbf W_t}/{\partial t} = - \eta \nabla_{\mathbf W} {\mathcal L},
\end{equation}
where $\mathbf W_t$ and $\mathbf F_t = \{f_t(\boldsymbol x)\}_{\boldsymbol x \in \mathbf X} \in \mathbb R^{n_l}$ are weights and predictions for the training set at optimization time index $t$, and $\eta$ is the learning rate. Temporal discretization of this gradient flow system with time-step $\Delta t = 1$ yields the fixed step-size GD algorithm commonly used in practice. Let also $\mathbf X'$ and $\mathbf Y'$ denote testing instances, and $\bar{\mathbf X} = [\mathbf X, \mathbf X'] \in \mathbb R^{n\times d}$ (resp. $\bar{\mathbf Y}$) the concatenation of training and testing inputs (resp. labels), where $n$ is the full dataset size. For convenience, we generally refer to $f_t(\boldsymbol x)$ as prediction for a single data point, and allow it to depend also on other nodes' information. $\mathbf F_t$ and $\mathbf F'_t$ are predictions for the training and testing sets. This setup can be extended to other loss functions and multi-dimensional output (see discussions in Appendix.~\ref{app_loss}, \ref{app_multioutput}).

Similar to~\citep{xu2021optimization}, we focus on learning node representations, where instances (i.e. nodes) and their relations (i.e. edges) are described by an undirected graph $\mathcal G = (\mathcal V, \mathcal E)$, $|\mathcal V|=n$. The graph defines a symmetric adjacency matrix $\mathbf A \in \mathbb R^{n\times n}$ where $\mathbf A_{ij} = 1$ for a pair of connected nodes $(\boldsymbol x_i, \boldsymbol x_j)$ otherwise $0$. Based on the data split, we denote submatrices of $\mathbf A$ using $\mathbf A_{\mathbf X\mathbf X}$ and $\mathbf A_{\mathbf X'\mathbf X}$.
Our insights apply to both transductive (i.e. semi-supervised learning) and inductive settings (see Appendix.~\ref{app_induc_trans}), but will focus on the former case unless stated otherwise.

\textbf{Graph Neural Networks (GNNs)} are a class of network architectures for learning representations on graphs. For GNNs with ReLU activation, each layer can be written as
\begin{equation} \label{eqn_layer_gcn}
\mathbf Z^{(\ell)} = \operatorname{ReLU}(\mathbf A \mathbf Z^{(\ell-1)} \mathbf W^{(\ell)}) \in \mathbb R^{n\times m},
\end{equation}
where $\mathbf Z^{(\ell)}$ are node representations at the $\ell$-th layer with $\mathbf Z^{(0)} = \bar{\mathbf X}$, and $m$ is the model width. The definition of $\mathbf A$ could differ for different GNNs. Our analysis applies to arbitrary $\mathbf A$ and will not differentiate adjacency and the actual message passing matrix used in the implementation.
We denote GNN prediction for a single data point as $f(\boldsymbol x; \mathbf A)$.


\textbf{Label Propagation (LP)} represents a class of algorithms for semi-supervised learning, where labels $\mathbf Y$ propagate along edges to efficiently predict $\mathbf F'$. From~\citep{zhou2003learning}, the LP update equation can be written as $\operatorname{LP}(\mathbf Y; k, \alpha) =$
\begin{equation} \label{eqn_lp}
\begin{split}
 [\mathbf F_{k},\mathbf F'_{k}] &= \alpha {\mathbf A} ~[\mathbf F_{k-1},\mathbf F'_{k-1}] + (1-\alpha) [\mathbf Y, \mathbf 0],
\end{split}
\end{equation}
where $[\cdot,\cdot]$ is concatenation, {$k$ is the iteration number}, and $\alpha$ is a hyperparameter. As initialization $[\mathbf F_0, \mathbf F'_0] = [\mathbf Y, \mathbf 0]$, and after convergence $[\mathbf F_\infty, \mathbf F'_\infty] \propto (\mathbf I_n - \alpha \mathbf A)^{-1} [\mathbf Y, \mathbf 0]$. LP algorithms have found wide applicability due to their superior efficiency and scalability.

\section{Graph Implicit Bias in Training} \label{sec_mpperspective}

\update{We commence by providing a label propagation perspective on the evolution of a general parameterized model during GD-based optimization, whereby we propose a simple non-parametric algorithm for semi-supervised learning. The algorithm shows that explicitly leveraging graph structure in the training dynamics to update the learned function leads to satisfactory generalization performance that is comparable to non-linear GNNs. This serves as an illustrative example, later contributing to our understanding of the optimization of GNNs where they implicitly leverage graph structures.}

\subsection{Label Propagation View of Gradient Descent} \label{sec_rp}

On the training set, one can characterize the evolution of a general parameterized model (with no restriction on model architecture) $f(\cdot)$, induced by GD-based optimization that continuously updates the weights $\mathbf W_t$ as~\citep{jacot2018neural}:
\begin{equation} \label{eqn_nndynamics}
\begin{split}
\partial\mathbf F_t /\partial t &= \eta~  
\mathbf \Theta_t(\mathbf X, \mathbf X) \mathbf R_t\\
\mathbf \Theta_t(\mathbf X, \mathbf X) &\triangleq \nabla_{\mathbf W} \mathbf F_t^\top \nabla_{\mathbf W} \mathbf F_t \in \mathbb R^{n_l\times n_l},
\end{split}
\end{equation}
where $\mathbf R_t = \mathbf Y - \mathbf F_t \in \mathbb R^{n_l}$ denotes \emph{residuals}~\citep{hastie2009elements} (a.k.a. errors), the difference between ground-truth labels and model predictions. The so-called \emph{Neural Tangent Kernel (NTK)}~\citep{jacot2018neural} $\mathbf \Theta_t(\mathbf X, \mathbf X)$ is produced by the product of Jacobians, which is dependent on the network architecture and evolves over time due to its association with time-varying weights. Intuitively, it quantifies similarity between instances based on how differently their outputs change by an infinitesimal perturbation of weights. Specially, if the kernel is constant (such as inner-product kernel for linear models), (\ref{eqn_nndynamics}) reduces to the training dynamics of kernel regression~\citep{shawe2004kernel}. The derivation of (\ref{eqn_nndynamics}) mainly relies on the chain rule; we reproduce it in Appendix~\ref{proof_rp_ntk} for self-containment.

\textbf{Residual Dynamics.}\quad While (\ref{eqn_nndynamics}) has found usage for analyzing the convergence of empirical risk~\citep{du2019gradientb,arora2019fine} and the spectral bias of deep learning~\citep{mei2019mean,cao2019towards}, it is restricted to a limited set of samples (i.e. training set). To see how the model evolves on arbitrary inputs for fully characterizing the learned function, we extend (\ref{eqn_nndynamics}) to accommodate unseen samples (which could be chosen arbitrarily).
Specifically, let $\mathbf R'_t = \mathbf Y' - \mathbf F'_t$ denote residuals for the testing set, and after temporal discretization, the ODE in (\ref{eqn_nndynamics}) can be rewritten neatly using a single variable residual $\mathbf R$ (see derivation in Appendix~\ref{proof_rp_ntk}),
\begin{equation} \label{eqn_inference}
\left[\mathbf R_{t+1}, {\mathbf R'_{t+1}}\right] = -\eta~ \mathbf \Theta_t(\bar{\mathbf X}, \bar{\mathbf X}) \left[{\mathbf R_t}, \mathbf 0\right] + \left[\mathbf R_t, \mathbf R'_{t}\right],
\end{equation}
where $\mathbf \Theta_t(\bar{\mathbf X}, \bar{\mathbf X}) \triangleq \nabla_{\mathbf W} [\mathbf F_t, \mathbf F'_t]^\top \nabla_{\mathbf W} [\mathbf F_t, \mathbf F'_t] \in \mathbb R^{n\times n}$ is the NTK matrix between training and testing sets. The $n\times n$ matrix will be henceforth abbreviated as $\mathbf \Theta_t$. The equation can be viewed as propagating residuals unidirectionally from training to arbitrary unseen samples based on a similarity measure, controlling the evolution of the learned function.

To provide more intuitions into the inner mechanism of (\ref{eqn_inference}), we rewrite it for an arbitrary unseen data point $\boldsymbol x'$:

\begin{equation} \label{eqn_single_rp} \nonumber
\eqnmarkbox{f}{f_{t+1}(\boldsymbol x')} = 
f_{t}(\boldsymbol x') + \eta \sum_{\boldsymbol x \in \mathbf X} \mathbf \Theta_t(\boldsymbol x, \boldsymbol x')(
\eqnmarkbox{y}{y(\boldsymbol x)} - f_t(\boldsymbol x) 
),
\end{equation}  {\tikzset{annotate equations/arrow/.style={->}} \annotatetwo[yshift=0.4em]{above}{y}{f}{(Ground-truth) label propagation in optimization}}
\hspace{-8pt}where $y(\boldsymbol x)$ is the ground-truth label for $\boldsymbol x$, and $\mathbf \Theta_t(\boldsymbol x, \boldsymbol x') = \nabla_{\mathbf W} f(\boldsymbol{x})^\top \nabla_{\mathbf W} f(\boldsymbol{x}')$. Intuitively, for an unseen instance $\boldsymbol x'$ that is more `similar' to $\boldsymbol x$, more ground-truth label information $y(\boldsymbol x)$ will then propagate to $\boldsymbol{x}'$, and vice versa (illustrated in Fig.~\ref{fig_intro}(a)). For $f_t(\boldsymbol x)\neq 0$, the ground-truth label is adjusted by subtracting current model prediction, i.e. $y(\boldsymbol x) - f_t(\boldsymbol x)$, enabling the propagation process to diminish progressively as errors or residuals are minimized.

\update{
Drawing upon an analogy between (\ref{eqn_inference}) induced by optimization, and instance-wise propagation schemes commonly seen in graph learning, we have the following hypothesis: \textit{For GNNs defined in Section \ref{sec_preliminary}, its optimization process implicitly leverages the graph structure to update the learned function by aligning their $\mathbf \Theta_t$ with certain forms of $\mathbf A$, which is also a key factor contributing to their good generalization performance}.}

\subsection{Residual Propagation} \label{sec_rp_theorem}
To test this hypothesis, we propose a toy algorithm that explicitly introduces graph structure information into training dynamics in (\ref{eqn_inference}), by replacing the NTK matrix $\mathbf \Theta_t$ with high-order graph adjacency matrix $\mathbf A^K$. While original (\ref{eqn_inference}) is expensive to run, such replacement allows us to actually implement it as a practically useful semi-supervised algorithm that can efficiently run, by taking advantage of the fact that $\mathbf A$ is sparse. We dub this algorithm as \emph{Residual Propagation (RP)}, whose update equation is
\begin{equation} \label{eqn_rp}
\left[\mathbf R_{t+1}, \mathbf R'_{t+1}\right] =  -\eta \mathbf A^{K}  [\mathbf R_t, \mathbf 0]  + \left[\mathbf R_t, \mathbf R'_{t}\right].
\end{equation}
At initialization, $\mathbf F_0$ and $\mathbf F'_0$ are defined as $\mathbf 0$, and unknown testing labels are defined as $\mathbf Y' = \mathbf 0$. One can conveniently convert $\mathbf R'_t$ back to predictions at a certain time step. More generally, one can flexibly replace the term $\mathbf A^{K}$ with other matrices indicating similarity of samples to broaden the use cases of the algorithm.

Intriguingly, we show the RP algorithm has interesting connections with various classic methods including LP~\citep{zhou2003learning} and kernel regression~\citep{shawe2004kernel}, though they emerge from very different contexts.

\begin{proposition}[Connection with Label Propagation] \label{thm_connectlp}
The first step of RP in (\ref{eqn_rp}) yields identical classification results as LP in (\ref{eqn_lp}) (with $\alpha = 1$ and $k = K$):
\begin{equation}
\begin{split}
\text{(First Step of RP):~~~}&[\mathbf F_{1}, \mathbf F'_1] = \eta \mathbf A^K  [\mathbf Y, \mathbf 0],\\
\text{(Label Propagation):~~~}&\operatorname{LP}(\mathbf Y;K, 1) = \mathbf A^K [\mathbf Y, \mathbf 0],
\end{split}
\end{equation}
and each of subsequent step of RP can also be viewed as LP on adjusted ground-truth labels, i.e. $\mathbf Y - \mathbf F_t = \mathbf R_t$. 
\end{proposition}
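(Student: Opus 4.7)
The plan is to verify both claims by direct algebraic unrolling, since the proposition is essentially a bookkeeping identity that follows from the initialization conventions of RP together with the definition of residuals. There is no substantive mathematical obstacle; the main care is only to track what is a prediction versus a residual, and to note why the factor $\eta$ does not affect classification outputs.

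First I would unroll the LP recursion (\ref{eqn_lp}) with $\alpha = 1$: starting from $[\mathbf F_0, \mathbf F'_0] = [\mathbf Y, \mathbf 0]$, the update becomes $[\mathbf F_k, \mathbf F'_k] = \mathbf A [\mathbf F_{k-1}, \mathbf F'_{k-1}]$, so after $K$ iterations $\operatorname{LP}(\mathbf Y;K,1) = \mathbf A^K [\mathbf Y, \mathbf 0]$. Next I would compute the initial residuals of RP using the stated convention $\mathbf F_0 = \mathbf F'_0 = \mathbf 0$ and $\mathbf Y' = \mathbf 0$, which gives $[\mathbf R_0, \mathbf R'_0] = [\mathbf Y, \mathbf 0]$. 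Substituting into (\ref{eqn_rp}) yields
\begin{equation*}
[\mathbf R_1, \mathbf R'_1] = -\eta \mathbf A^K [\mathbf Y, \mathbf 0] + [\mathbf Y, \mathbf 0],
\end{equation*}
and converting back to predictions via $\mathbf F_{t+1} = \mathbf Y - \mathbf R_{t+1}$ and $\mathbf F'_{t+1} = \mathbf Y' - \mathbf R'_{t+1} = -\mathbf R'_{t+1}$ gives $[\mathbf F_1, \mathbf F'_1] = \eta \mathbf A^K [\mathbf Y, \mathbf 0]$. Since $\eta>0$ is a positive scalar and classification is read off from the sign (or argmax) of predictions, this is identical at the classification level to $\operatorname{LP}(\mathbf Y;K,1)$.

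For the second claim I would subtract the prediction-form equations at consecutive steps. Using $[\mathbf F_t,\mathbf F'_t] = [\mathbf Y,\mathbf Y'] - [\mathbf R_t,\mathbf R'_t]$ and applying (\ref{eqn_rp}),
\begin{equation*}
[\mathbf F_{t+1},\mathbf F'_{t+1}] - [\mathbf F_t,\mathbf F'_t] = [\mathbf R_t,\mathbf R'_t] - [\mathbf R_{t+1},\mathbf R'_{t+1}] = \eta \mathbf A^K [\mathbf R_t, \mathbf 0].
\end{equation*}
The right-hand side is precisely $\eta \cdot \operatorname{LP}(\mathbf R_t; K, 1)$ by the unrolling done in the first step, but with the training labels replaced by the current residuals $\mathbf R_t = \mathbf Y - \mathbf F_t$. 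Hence every iteration of RP performs, up to the global factor $\eta$, an application of $K$-step LP on the adjusted ground-truth labels.

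The only subtlety worth flagging in the write-up is the distinction between "identical predictions" and "identical classification results": RP carries the scalar $\eta$ that LP does not, and this must be absorbed either by choosing $\eta = 1$ or by noting scale-invariance of sign/argmax. Everything else is direct substitution of the initialization into (\ref{eqn_rp}) and comparison with the closed-form unrolling of (\ref{eqn_lp}).
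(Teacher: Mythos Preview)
Your proposal is correct and matches the paper's approach: the paper does not give a separate proof for this proposition, treating the displayed identities themselves as the argument, so your direct unrolling of (\ref{eqn_lp}) with $\alpha=1$, substitution of the RP initialization into (\ref{eqn_rp}), and the observation that the scalar $\eta$ is absorbed by sign/argmax is exactly what is intended. Your additional remark on the prediction-increment form for the second claim is a helpful elaboration beyond what the paper spells out.
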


Besides the first step, each of subsequent step of RP can also be viewed as LP on adjusted ground-truth labels, i.e. $\mathbf Y - \mathbf F_t = \mathbf R_t$. This result shows RP encompasses LP as a special case; such a connection further motivates a generalized version of RP by combining with other off-the-shelf LP variants (see references in Appendix~\ref{app_rw_lp}):
\begin{equation} \label{eqn_generalrp}
\left[\mathbf R_{t+1}, \mathbf R'_{t+1}\right] =-\eta \operatorname{LP}^*(\mathbf R_t)  + \left[\mathbf R_t, \mathbf R'_{t}\right].
\end{equation}
where $\operatorname{LP}^*(\cdot): \mathbb R^{n_l} \rightarrow \mathbb R^{n}$ is a general LP function that takes as input ground-truth labels and outputs predictions.

\begin{table*}[t]
\centering
\caption{Empirical evaluation of RP on \texttt{OGB} datasets. Accuracy is reported for \texttt{Arxiv} and \texttt{Products}, and ROC-AUC for \texttt{Proteins}. Last three rows compare RP against full-batch GNN. Results of some baseline are from the official leaderboard.}  \label{tbl_bench}
\resizebox{0.95\textwidth}{!}{
\begin{tabular}{@{}c|c|cc|cc|cc|c@{}}
\toprule
\textbf{Model} & \textbf{Feat.} & \multicolumn{2}{c|}{\texttt{Arxiv}} & \multicolumn{2}{c|}{\texttt{Proteins}} &  \multicolumn{2}{c|}{\texttt{Products}}&\textbf{\# Param.}\\
& & Validation & Test & Validation & Test & Validation & Test & \\
\midrule
MLP & $\mathbf X$  &57.65 $\pm$ 0.12 & 55.50 $\pm$ 0.23 & 77.06 $\pm$ 0.14& 72.04 $\pm$ 0.48 &75.54 $\pm$ 0.14& 61.06 $\pm$ 0.08  & $O(\ell m^2)$ \\
LinearGNN & $\mathbf X, \mathbf A$   & 70.67 $\pm$ 0.02& 69.39 $\pm$ 0.11 & 66.11 $\pm$ 0.87 & 62.89 $\pm$ 0.11 & 88.97 $\pm$ 0.01 & 74.21 $\pm$ 0.04  &$O(dc)$\\
GNN & $\mathbf X, \mathbf A$   & \cellcolor{Goldenrod!40} {73.00 $\pm$ 0.17} & \cellcolor{Goldenrod!40}{71.74 $\pm$ 0.29} & 79.21 $\pm$ 0.18& 72.51 $\pm$ 0.35 & \cellcolor{Goldenrod!40}{92.00 $\pm$ 0.03}& \cellcolor{gray!30}75.64 $\pm$ 0.21  & $O(\ell m^2)$\\
LP & $\mathbf A$ & 70.14 $\pm$ 0.00 & 68.32 $\pm$ 0.00 & \cellcolor{gray!30}83.02 $\pm$ 0.00& \cellcolor{gray!30}74.73 $\pm$ 0.00 & 90.91 $\pm$ 0.00& 74.34 $\pm$ 0.00  & 0\\
\midrule
RP (ours) & $\mathbf A$  &\cellcolor{gray!30}71.37 $\pm$ 0.00& \cellcolor{gray!30}70.06 $\pm$ 0.00 & \cellcolor{Goldenrod!40}{85.19 $\pm$ 0.00}& \cellcolor{Goldenrod!40}{78.17 $\pm$ 0.00} &\cellcolor{gray!30} 91.31 $\pm$ 0.00& \cellcolor{Goldenrod!40}{78.25 $\pm$ 0.00} & 0\\
Speedup / step & & \multicolumn{2}{c|}{$\times$ 14.48} & \multicolumn{2}{c|}{$\times$ 14.00} & \multicolumn{2}{c|}{$\times$ 12.46} & \\
Time to Acc. & & \multicolumn{2}{c|}{$\times$ 0.01461} & \multicolumn{2}{c|}{$\times$ 0.00008} & \multicolumn{2}{c|}{$\times$ 0.00427} &\\
Memory & & \multicolumn{2}{c|}{$\times$ 0.094} & \multicolumn{2}{c|}{$\times$ 0.363} & \multicolumn{2}{c|}{$\times$ 0.151} &\\
\bottomrule
\end{tabular}}
\end{table*}

\begin{figure*}[h]
\centering
\begin{minipage}[t]{\linewidth}
\begin{minipage}[t]{0.33\linewidth}
\centering
\includegraphics[width=0.95\textwidth]{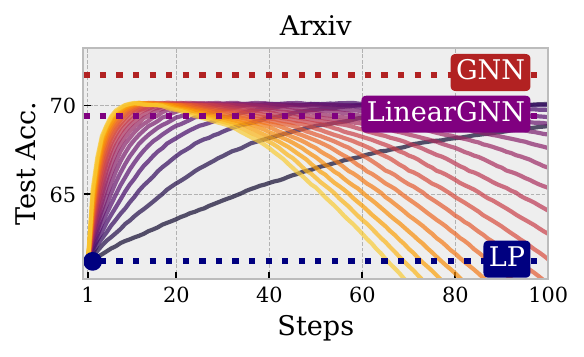}
\end{minipage}%
\begin{minipage}[t]{0.33\linewidth}
\centering
\includegraphics[width=0.95\textwidth]{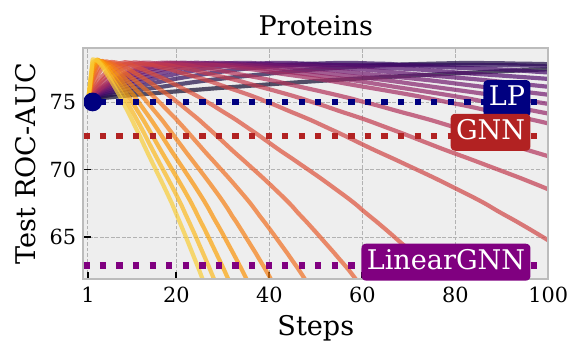}
\end{minipage}%
\begin{minipage}[t]{0.33\linewidth}
\centering
\includegraphics[width=0.95\textwidth]{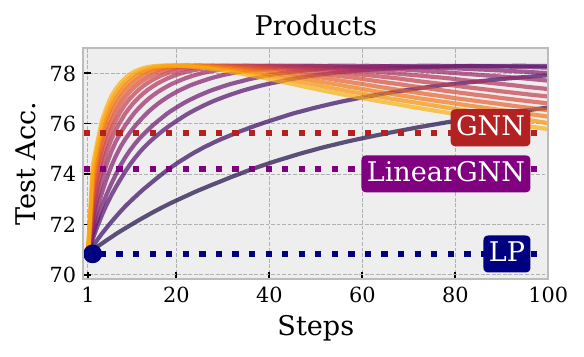}
\end{minipage}%
\end{minipage}
\caption{Learning curves of RP and comparison with the performance of LP ($\alpha=1$), linear GNN and deep GNN. Transition from \textcolor[RGB]{200, 171, 37}{yellow} to \textcolor[RGB]{102, 60, 130}{purple} denotes RP with decreasing step size $\eta$.}
\label{fig_rpcurve}
\end{figure*}

\begin{theorem}[Convergence \& Connection with Kernel Regression]\label{thm_converge}
For RP in (\ref{eqn_rp}) and sufficiently small step size $\eta < 2 \sigma^{-1}_{max}[\mathbf A^K_{\mathbf X\mathbf X}]$, {where $\mathbf A^K_{\mathbf X\mathbf X}$ is a submatrix of $\mathbf A^K$, and $\sigma_{max}$ is its largest eigenvalue}, $\mathbf R_t$ and $\mathbf R'_t$ converge for positive definite $\mathbf A^K_{\mathbf X\mathbf X}$ or positive semi-definite $\mathbf A^K$. Upon convergence in the former case, the predictions are equivalent to the kernel regression solution w.r.t. kernel $\kappa(\boldsymbol x, \boldsymbol x') \triangleq \mathbf A^K_{\boldsymbol x\boldsymbol x'}$
\begin{equation}
\mathbf F_\infty = \mathbf Y, \quad \mathbf F'_\infty = \mathbf A^K_{\mathbf X'\mathbf X}(\mathbf A^K_{\mathbf X\mathbf X})^{-1} \mathbf Y.
\end{equation}
\end{theorem}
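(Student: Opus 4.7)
The plan is to exploit the fact that the recursion (\ref{eqn_rp}) decouples, since the testing residual $\mathbf R'_t$ never appears on the right-hand side of the update for $\mathbf R_t$. First I would extract the training-only recursion
\[
\mathbf R_{t+1} = \left(\mathbf I - \eta \mathbf A^K_{\mathbf X\mathbf X}\right) \mathbf R_t,
\]
which is linear and autonomous, so its closed form is $\mathbf R_t = (\mathbf I - \eta \mathbf A^K_{\mathbf X\mathbf X})^t \mathbf R_0$ with $\mathbf R_0 = \mathbf Y$ (from $\mathbf F_0 = \mathbf 0$). The testing residual then satisfies $\mathbf R'_{t+1} = \mathbf R'_t - \eta \mathbf A^K_{\mathbf X'\mathbf X} \mathbf R_t$, which telescopes using $\mathbf R'_0 = \mathbf 0$ (since $\mathbf Y' = \mathbf 0$) to
\[
\mathbf R'_t = -\eta\, \mathbf A^K_{\mathbf X'\mathbf X}\sum_{s=0}^{t-1} \left(\mathbf I - \eta \mathbf A^K_{\mathbf X\mathbf X}\right)^s \mathbf Y.
\]
Both claims then reduce to a spectral analysis of $\mathbf I - \eta \mathbf A^K_{\mathbf X\mathbf X}$.

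In the positive definite case, $\mathbf A^K_{\mathbf X\mathbf X}$ is symmetric with eigenvalues in $(0,\sigma_{max}]$, so the hypothesis $\eta < 2\sigma_{max}^{-1}$ forces every eigenvalue of $\mathbf I - \eta \mathbf A^K_{\mathbf X\mathbf X}$ strictly inside $(-1,1)$. Diagonalizing via the spectral theorem then gives $\mathbf R_t \to \mathbf 0$, i.e.\ $\mathbf F_\infty = \mathbf Y$. For the prediction formula, I would recognize the truncated sum above as the Neumann series for $\mathbf I - (\mathbf I - \eta \mathbf A^K_{\mathbf X\mathbf X})$, with limit $(\eta \mathbf A^K_{\mathbf X\mathbf X})^{-1}$. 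Substituting back yields $\mathbf R'_\infty = -\mathbf A^K_{\mathbf X'\mathbf X}(\mathbf A^K_{\mathbf X\mathbf X})^{-1}\mathbf Y$, and since $\mathbf Y' = \mathbf 0$ this gives $\mathbf F'_\infty = \mathbf A^K_{\mathbf X'\mathbf X}(\mathbf A^K_{\mathbf X\mathbf X})^{-1}\mathbf Y$, which coincides with the kernel regression predictor for the kernel $\kappa(\boldsymbol x,\boldsymbol x') = \mathbf A^K_{\boldsymbol x\boldsymbol x'}$.

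The main obstacle is the merely positive semi-definite case, where $\mathbf A^K_{\mathbf X\mathbf X}$ can have a nontrivial kernel and the matrix inverse above ceases to exist. My approach is to orthogonally decompose $\mathbf Y = \mathbf Y_R + \mathbf Y_N$ with $\mathbf Y_N \in \ker \mathbf A^K_{\mathbf X\mathbf X}$. On the range component the previous spectral argument still forces convergence to zero, while $(\mathbf I - \eta \mathbf A^K_{\mathbf X\mathbf X})^t \mathbf Y_N = \mathbf Y_N$ is fixed, so $\mathbf R_t$ converges to $\mathbf Y_N$. The delicate point is that the partial sum driving $\mathbf R'_t$ contains a contribution $t\, \mathbf A^K_{\mathbf X'\mathbf X} \mathbf Y_N$ that would diverge linearly in $t$; the key lemma I would prove is that positive semi-definiteness of the \emph{full} $\mathbf A^K$ forces $\mathbf A^K_{\mathbf X'\mathbf X}$ to annihilate $\ker \mathbf A^K_{\mathbf X\mathbf X}$. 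Concretely, for any $\boldsymbol v \in \ker \mathbf A^K_{\mathbf X\mathbf X}$, the zero-extended vector $[\boldsymbol v;\mathbf 0]$ satisfies $[\boldsymbol v;\mathbf 0]^\top \mathbf A^K [\boldsymbol v;\mathbf 0] = \boldsymbol v^\top \mathbf A^K_{\mathbf X\mathbf X}\boldsymbol v = 0$, and vanishing of a PSD quadratic form implies $\mathbf A^K [\boldsymbol v;\mathbf 0] = \mathbf 0$; the lower block reads off $\mathbf A^K_{\mathbf X'\mathbf X}\boldsymbol v = \mathbf 0$. This lemma kills the divergent contribution and reduces the Neumann sum to the range of $\mathbf A^K_{\mathbf X\mathbf X}$, on which the pseudoinverse analogue of the PD calculation carries through, establishing convergence of $\mathbf R'_t$.
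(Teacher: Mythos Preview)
Your proposal is correct and tracks the paper's proof closely: both decouple the recursion, write the closed forms $\mathbf R_t = (\mathbf I - \eta \mathbf A^K_{\mathbf X\mathbf X})^t \mathbf Y$ and $\mathbf R'_t = -\eta\,\mathbf A^K_{\mathbf X'\mathbf X}\sum_{s<t}(\mathbf I - \eta \mathbf A^K_{\mathbf X\mathbf X})^s \mathbf Y$, and dispatch the positive-definite case by the spectral theorem plus a Neumann series. The only substantive difference is in the PSD case. The paper factors $\mathbf A^K_{\mathbf X\mathbf X} = \mathbf B\mathbf B^\top$ with $\mathbf B$ tall of full column rank, notes that PSD of the full $\mathbf A^K$ gives a compatible factorization $\mathbf A^K_{\mathbf X'\mathbf X} = \mathbf B'\mathbf B^\top$, and then applies the push-through identity $\mathbf B^\top(\mathbf I_{n_l} - \eta \mathbf B\mathbf B^\top)^s = (\mathbf I_r - \eta \mathbf B^\top\mathbf B)^s \mathbf B^\top$ to reduce the geometric series to one governed by the invertible $r\times r$ matrix $\mathbf B^\top\mathbf B$. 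Your route---showing directly via the PSD quadratic form that $\mathbf A^K_{\mathbf X'\mathbf X}$ annihilates $\ker \mathbf A^K_{\mathbf X\mathbf X}$, thereby killing the linearly divergent $t\,\mathbf A^K_{\mathbf X'\mathbf X}\mathbf Y_N$ term---is an equivalent but slightly more elementary packaging of the same linear-algebraic fact, and avoids having to introduce the factorization and push-through machinery. Both arguments land on the same pseudoinverse-type limit on the range of $\mathbf A^K_{\mathbf X\mathbf X}$.
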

See proof and a more comprehensive discussion of the convergence of RP and its connection with classic algorithms in Appendix~\ref{app_converge}. Different from kernel regression, RP might not necessarily converge since it does not restrict the propagation matrix to be PSD or symmetric. But intriguingly, it can still achieve satisfactory generalization performance by stopping at the step with peak validation performance.

\subsection{Preliminary Empirical Verification}

Since RP represents an extreme case where the kernel function perfectly aligns with the graph, we test its performance against GNNs to empirically verify whether the graph implicit bias in optimization alone is crucial for satisfactory performance.

\textbf{Setup.}\quad We compare RP with some standard GNN architectures (LinearGNN~\citet{wu2019simplifying} and GCN~\citet{kipf2016semi}) on a diverse set of 15 datasets, including three challenging \texttt{OGB}~\citep{hu2020open} datasets \texttt{Arxiv}, \texttt{Proteins}, \texttt{Products} with up to millions of nodes and edges. Due to space limit, we report results on OGB datasets in the main text, and defer the rest and experimental details to Appendix~\ref{app_addexp} and \ref{app_detail1}.

\textbf{Results.}\quad In Table~\ref{tbl_bench}, the proposed RP demonstrates competitive performance. 
As depicted in Fig.~\ref{fig_rpcurve}, RP achieves the same performance as LP using one step, and quickly increases until reaching its peak performance, which often surpasses GNNs. Specifically, on \texttt{Proteins} where the graph contains relatively richer structural information, a single step of RP exceeds a well-trained deep GNN, while in \texttt{Products}, merely four steps of RP exceeds the GNN. Moreover, RP does not require trainable parameters and boosts speed, with each step being more than 10 times faster than each gradient descent step for training GNN. Furthermore, RP can achieve its peak performance using less steps, and thus overall takes significantly less time (up to $1000$ times less) to attain GNN-level performance. RP also inherits the scalability of LP and only requires storage space for predictions; thus it consumes up to $10$ times less memory in practice.

In Appendices~\ref{app_addexp1} and \ref{app_addexp2} respectively, we discuss generalized RP that combines with kernels (e.g. Gaussian kernel) to incorporate node features. The corresponding update equation can be re-written as $\left[\mathbf R_{t+1}, \mathbf R'_{t+1}\right] = $
\begin{equation}
-\eta \mathbf A^{K} \mathbf K(\bar{\mathbf X}, \bar{\mathbf X})\mathbf A^{K}  [\mathbf R_t, \mathbf 0]  + \left[\mathbf R_t, \mathbf R'_{t}\right],
\end{equation}
where $\mathbf K(\bar{\mathbf X}, \bar{\mathbf X})$ could be specified as arbitrary kernel functions (such as Sigmoid, Gaussian, etc.) Additionally, we test $12$ datasets including homophilic (\texttt{Cora}, \texttt{Citeseer}, \texttt{Pubmed}, \texttt{Computer}, \texttt{Photo}, \texttt{CS}, \texttt{Physics}) and heterophilic (\texttt{roman-empire}, \texttt{amazon-ratings}, \texttt{minesweeper}, \texttt{tolokers}, \texttt{questions}) ones where we find RP can still outperform most popular GNNs.

\section{Analysis in Overparameterized Regime} \label{sec_gnndynamics}

We next theoretically verify that the optimization of GNNs indeed implicitly leverage graph structure for updating the learned function. Further empirical verification will be deferred to the next section.

\subsection{Insights from Explicit Formula of GNNs' NTK}

Similar to the training dynamics of general parameterized models in (\ref{eqn_inference}), the training dynamics of GNNs in node-level tasks is characterized by their NTK defined as follows:

\begin{definition}[Node-Level GNTK]
For a $\ell$-layer GNN in node-level tasks defined in Sec.~\ref{sec_preliminary}, the NTK is defined as
\begin{equation} \label{eqn_nodelevelgntk}
\mathbf \Theta^{(\ell)}_t(\boldsymbol x, \boldsymbol x'; \mathbf A) = \nabla_{\mathbf W} f(\boldsymbol x; \mathbf A)^\top \nabla_{\mathbf W} f(\boldsymbol x'; \mathbf A),
\end{equation}
which we refer to as Node-Level Graph Neural Tangent Kernel (GNTK), or simply NTK of GNNs, to differentiate it with the graph-level GNTK for graph-level tasks initially proposed in \citet{du2019graph}.
\end{definition}

How GNNs evolve during training also follows (\ref{eqn_inference}). Ideally, one might want to show the precise mathematical characterization of (\ref{eqn_nodelevelgntk}) and its connection with $\mathbf A$, which however is impossible in principle due the complexity of optimizing non-linear neural networks. \update{To make the analysis of non-linear models tractable, a previous work~\citep{xu2021optimization} removes all activations such that the GNN boils down to a linear model. In comparison, we adopt an assumption that makes the analysis tractable in a way that still preserves model's non-linearity.} Specifically, we study node-level GNTK in overparameterized regimes, where the model width $m$ tends to infinity, and consequently, neural networks asymptotically converge to its first order Taylor expansion around its initialization (e.g. \citet{jacot2018neural,lee2019wide}), i.e. a kernel regression predictor. In this case, the node-level GNTK is a constant kernel $\mathbf \Theta^{(\ell)}$ (without subscript $t$). \update{We next represent the explicit mathematical formula, which applies to arbitrary layer number $\ell$, input features $\mathbf X$, graph structure $\mathcal G$, and various GNN architectures that conform to the definition in Section~\ref{sec_preliminary}.}

Similar to fully-connected neural networks, the NTK of a $\ell$-layer GNN can be recursively computed based on the NTK of a $(\ell-1)$-layer GNNs. The layer-wise formula can be decomposed into two steps that respectively correspond to \emph{Transformation} (i.e. $\mathbf Z\leftarrow \operatorname{ReLU}(\mathbf Z\mathbf W)$) and \emph{Propagation} (i.e. $\mathbf Z\leftarrow \mathbf A \mathbf Z$) in the GNN architecture. As additional elements in the computation, we denote a node-level GNTK for GNN without propagation at $\ell$-th layer as {$\small \bar{\mathbf \Theta}^{(\ell)}$}, the covariance matrix of the $\ell$-th layer's outputs with (and w/o) propagation as {${\mathbf \Sigma}^{(\ell)}$} (and {$\bar{\mathbf \Sigma}^{(\ell)}$}), and the covariance matrix of the derivative to the $\ell$-th layer as {$\dot{{\mathbf \Sigma}}^{(\ell)}$}. Then the propagation and transformation steps in each layer respectively correspond to (we concisely show the key steps here and defer the complete formula to Appendix~\ref{app_gntk}):
\begin{equation}
\mbox{(\textit{Transformation})} \quad \bar{\mathbf \Theta}^{(\ell)} = {\mathbf \Theta}^{(\ell-1)} \odot \dot{\mathbf{\Sigma}}^{(\ell)} + \bar{\mathbf{\Sigma}}^{(\ell)} 
\end{equation}
\begin{equation} 
{\mbox{(\textit{Propagation})}} \quad\left\{\begin{array}{l}
    {\mathbf{\Sigma}}^{(\ell)}= {\mathbf A} ~\bar{\mathbf{\Sigma}}^{(\ell)} {\mathbf A} \\
    {\mathbf \Theta}^{(\ell)}={\mathbf A} ~\bar{\mathbf \Theta}^{(\ell)} {\mathbf A}.
    \end{array}\right.
\end{equation}
\paragraph{Implications.} Compared with the computation for NTK of a fully-connected neural network~\citep{jacot2018neural}, the node-level GNTK has an equivalent transformation step, while its uniqueness stems from the propagation step, whereby the adjacency matrix $\mathbf A$ (or propagation matrix more generally) naturally integrates into the kernel similarity measure. Consequently, this kernel function inherently accommodates a graph implicit bias, and thus the gradient descent optimization of GNNs also tends to follow the trajectory regulated by the graph, similar to the behavior of the RP algorithm. To give more concrete examples, we provide case studies and show how the training dynamics of certain shallow GNNs, given fixed inputs (that are practically used), can be exactly described by the framework of generalized RP in (\ref{eqn_generalrp}). 

\update{We will further provide analysis for real-world finite width GNNs' NTK in Section~\ref{sec5}, to empirically corroborate our insights. We believe similar results can be potentially obtained using more advanced techniques, e.g.~\citep{bai2019beyond}, which we leave as future works.}

\subsection{Illustrative Examples} 

Given that our primary focus centers on the role of graphs in GNNs (as without them, GNNs are largely equivalent to MLPs), we exclude external node features and instead define inputs as either: 1) an identity matrix {$\bar{\mathbf X} \triangleq \mathbf I_{n}$} that assigns each node a one-hot vector as indication of its unique identity~(as sometimes assumed in practice \citep{kipf2016semi,zhu2021graph}), which can be viewed as learning a unique embedding for each node by treating the first-layer weights as an embedding table; 2) fixed node embeddings from graph spectral decomposition {$\bar{\mathbf X} \triangleq \mathop{\arg\min}_{\mathbf B} \|\mathbf A - \mathbf B \mathbf B^\top\|_F^2$}, which aligns with various network embedding approaches based on definitions of $\mathbf A$~\citep{qiu2018network}.

\paragraph{Two-Layer GNN.} 
Following the setup from prior work on fully-connected neural networks~\citep{arora2019fine}, we consider the training of two-layer GNNs where first-layer weights are optimized:

\begin{theorem}[Two-Layer GNN] \label{thm_twolayergnn}
For an infinite width two-layer GNN defined as $[\mathbf F, \mathbf F'] = \mathbf A \operatorname{ReLU}(\mathbf A \bar{\mathbf X} \mathbf W^{(1)}) \mathbf W^{(2)} / \sqrt{m}$ with $\bar{\mathbf X} = \mathbf I_{n}$ as inputs and standard NTK parameterization, its training dynamics by optimizing $\mathbf W^{(1)}$ can be written as a generalized RP process
\begin{eqnarray}
&\left[\mathbf R_{t+1}, \mathbf R'_{t+1}\right] = -\eta \mathbf A (\mathbf A^2 \odot {\mathbf S})\mathbf A [\mathbf R_t, \mathbf 0]  + \left[\mathbf R_t, \mathbf R'_{t}\right],\nonumber\\
&{\mathbf S}_{ij} = \left({\pi - \operatorname{arccos}(\frac{\mathbf A_i^\top \mathbf A_j}{\|\mathbf A_i\| \|\mathbf A_j\|})}\right)/{2\pi}.
\end{eqnarray}
The matrix $\mathbf S$ reweights each element in $\mathbf A^2$ by the similarity of neighborhood distributions of two nodes. For $\bar{\mathbf X} = \mathop{\arg\min}_{\mathbf B} \|\mathbf A - \mathbf B \mathbf B^\top\|_F^2$, the propagation matrix is replaced by $\mathbf A(\mathbf A^3\odot \tilde{\mathbf S}) \mathbf A$ where $\tilde{\mathbf S}$ is another reweighting matrix (details and proof in Appendix~\ref{sec_twolayergnn}).
\end{theorem}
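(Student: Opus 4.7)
My plan is to identify the NTK of this two-layer architecture explicitly and then plug it into the general training dynamics formula (\ref{eqn_inference}). First I would write the per-node prediction as $f_i = \tfrac{1}{\sqrt m} \sum_k \mathbf A_{ik} \sum_j \sigma(\mathbf H_k \mathbf W^{(1)}_{:j}) \mathbf W^{(2)}_j$, where $\mathbf H = \mathbf A \bar{\mathbf X}$ and $\sigma=\operatorname{ReLU}$, and differentiate with respect to the $j$-th column $\mathbf W^{(1)}_{:j}$ to get $\nabla_{\mathbf W^{(1)}_{:j}} f_i = \tfrac{1}{\sqrt m}\sum_k \mathbf A_{ik}\,\sigma'(\mathbf H_k \mathbf W^{(1)}_{:j})\,\mathbf W^{(2)}_j\,\mathbf H_k^\top$. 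Taking the inner product of the gradients for two nodes $i,i'$ and summing over $j$ yields, before any limit,
\begin{equation*}
\mathbf \Theta_{ii'} \;=\; \sum_{k,k'} \mathbf A_{ik}\mathbf A_{i'k'}\Bigl(\tfrac{1}{m}\sum_j (\mathbf W^{(2)}_j)^2 \sigma'(\mathbf H_k \mathbf W^{(1)}_{:j})\sigma'(\mathbf H_{k'} \mathbf W^{(1)}_{:j})\Bigr)\mathbf H_k \mathbf H_{k'}^\top.
\end{equation*}

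Next I would take the infinite-width limit. Invoking the standard NTK parameterization and the convergence results cited in the paper (Jacot et al., Lee et al.), the bracketed empirical average concentrates on $\mathbb E_{\boldsymbol w\sim\mathcal N(0,I)}[\sigma'(\mathbf H_k \boldsymbol w)\sigma'(\mathbf H_{k'}\boldsymbol w)]$, and the kernel becomes time-independent so that the training dynamics of (\ref{eqn_inference}) reduce to the linear recursion claimed. The ReLU derivative expectation is the well-known arc-cosine kernel of order zero, which evaluates to $(\pi - \arccos(\langle \mathbf H_k,\mathbf H_{k'}\rangle/(\|\mathbf H_k\|\|\mathbf H_{k'}\|)))/2\pi$. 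For $\bar{\mathbf X}=\mathbf I_n$ the rows of $\mathbf H=\mathbf A$ are exactly $\mathbf A_k$, so this expectation is $\mathbf S_{kk'}$, and $\mathbf H_k\mathbf H_{k'}^\top = (\mathbf A\mathbf A^\top)_{kk'} = (\mathbf A^2)_{kk'}$ by symmetry of $\mathbf A$.

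Assembling the two factors, the double sum collapses to $\mathbf \Theta_{ii'} = \sum_{k,k'}\mathbf A_{ik}(\mathbf A^2\odot\mathbf S)_{kk'}\mathbf A_{k'i'} = [\mathbf A(\mathbf A^2\odot\mathbf S)\mathbf A]_{ii'}$, which is the propagation matrix in the statement; inserting this $\mathbf \Theta$ into (\ref{eqn_inference}) produces the claimed update. For the spectral-embedding case $\bar{\mathbf X}=\arg\min_{\mathbf B}\|\mathbf A-\mathbf B\mathbf B^\top\|_F^2$, the same derivation goes through with $\mathbf H\mathbf H^\top = \mathbf A\bar{\mathbf X}\bar{\mathbf X}^\top\mathbf A = \mathbf A\cdot\mathbf A\cdot\mathbf A = \mathbf A^3$, so $(\mathbf A^2)_{kk'}$ is replaced by $(\mathbf A^3)_{kk'}$ and $\mathbf S$ by the corresponding reweighting $\tilde{\mathbf S}$ built from the same arc-cosine formula applied to the rows of $\mathbf A\bar{\mathbf X}$.

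The main obstacle I anticipate is not the algebra, which is essentially bookkeeping once the right gradient is written down, but the analytical justification of the infinite-width limit: one has to argue carefully that (i) the empirical average over hidden units concentrates on the Gaussian expectation uniformly in the relevant quantities, and (ii) the kernel remains essentially frozen at its initialization throughout gradient flow so that the dynamics truly take the linear form (\ref{eqn_inference}) with a constant $\mathbf \Theta$. I would handle this by appealing to the overparameterization results cited earlier in the paper rather than reproving them, and restrict the computation above to the initialization kernel, which suffices for the stated theorem.
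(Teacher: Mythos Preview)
Your proposal is correct and follows essentially the same route as the paper's proof: compute the gradient of the two-layer GNN output with respect to $\mathbf W^{(1)}$, take the inner product to obtain a double sum over neighbors weighted by $\mathbf A$, pass to the infinite-width limit to replace the empirical average by the arc-cosine kernel $\mathbb E_{\boldsymbol w}[\mathds 1\{\boldsymbol w^\top \mathbf H_k\ge 0,\boldsymbol w^\top \mathbf H_{k'}\ge 0\}]$, and then specialize $\mathbf H\mathbf H^\top$ to $\mathbf A^2$ or $\mathbf A^3$ according to the two input choices. The paper likewise defers the concentration/kernel-freezing step to the cited overparameterization results rather than reproving it, so your handling of that point matches as well.
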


\paragraph{Deep and Wide GNNs.}
Pushing further, we can also characterize the evolution of arbitrarily deep GNNs where feature propagation is applied at the last layer (e.g. ~\citet{klicpera2018predict,liu2020towards,spinelli2020adaptive,chien2020adaptive}):
, i.e. $f(\mathbf X;\mathbf A, \mathbf W) = \mathbf A^{\ell} \operatorname{MLP}(\mathbf X; \mathbf W)$.

\begin{theorem}[Deep and Wide GNN Dynamics] \label{thm_decoupledgnn}
For arbitrarily deep and infinitely-wide GNNs with feature propagation deferred to the last layer, i.e. $[\mathbf F, \mathbf F'] = \mathbf A^{\ell} \operatorname{MLP}(\bar{\mathbf X})$ with $\bar{\mathbf X} = \mathbf I_{n}$, the training dynamics that result from optimizing MLP weights can be written as the generalized RP process $\left[\mathbf R_{t+1}, \mathbf R'_{t+1}\right] = $
\begin{equation}
-\eta \mathbf A^{\ell} (\mathbf I_n + c\mathbf 1 \mathbf 1^\top) \mathbf A^{\ell}  [\mathbf R_t, \mathbf 0]  ~+~ \left[\mathbf R_t, \mathbf R'_{t}\right],
\end{equation}
where $c \geq 0$ is a constant determined by the model depth, and $\mathbf 1$ is an all-$1$ column vector.
\end{theorem}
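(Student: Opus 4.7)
The plan is to exploit the fact that in this architecture all feature propagation by $\mathbf A^{\ell}$ occurs at the last layer and is independent of the trainable weights $\mathbf W$. Since $[\mathbf F, \mathbf F'] = \mathbf A^{\ell}\operatorname{MLP}(\bar{\mathbf X}; \mathbf W)$, the Jacobian with respect to $\mathbf W$ pulls $\mathbf A^{\ell}$ outside, giving the factorization
\[
\mathbf{\Theta} \;=\; \nabla_{\mathbf W}[\mathbf F, \mathbf F']^{\top}\nabla_{\mathbf W}[\mathbf F, \mathbf F'] \;=\; \mathbf A^{\ell}\,\mathbf K_{\mathrm{MLP}}\,\mathbf A^{\ell},
\]
where $\mathbf K_{\mathrm{MLP}} \in \mathbb R^{n\times n}$ is the NTK of the underlying infinitely-wide MLP on the one-hot inputs $\bar{\mathbf X} = \mathbf I_n$, and I have used the symmetry of $\mathbf A^{\ell}$. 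Substituting this $\mathbf{\Theta}$ into the general residual recurrence in equation~(\ref{eqn_inference}) yields the claimed update as soon as $\mathbf K_{\mathrm{MLP}}$ is shown to lie in $\mathrm{span}\{\mathbf I_n, \mathbf 1\mathbf 1^{\top}\}$.

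For the structure of $\mathbf K_{\mathrm{MLP}}$, I would invoke permutation symmetry. The MLP NTK between two inputs depends only on pairwise inner products, and for one-hot inputs these are $\mathbf e_i^{\top}\mathbf e_j = \delta_{ij}$. Consequently $(\mathbf K_{\mathrm{MLP}})_{ij}$ takes one value $a$ on the diagonal and a single common value $b$ off the diagonal; equivalently $\mathbf K_{\mathrm{MLP}}$ commutes with every permutation matrix and so must equal $(a-b)\mathbf I_n + b\mathbf 1\mathbf 1^{\top}$. Absorbing the scalar $(a-b)$ into the learning rate $\eta$ and setting $c \triangleq b/(a-b)$ produces the stated formula.

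It remains to check $c \geq 0$, i.e.\ $a \geq b \geq 0$. I would argue this inductively through the MLP NTK recursion: at the input layer, $\mathbf{\Sigma}^{(0)} = \mathbf I_n$ has $a_0=1 > b_0=0$; each subsequent layer applies the ReLU arc-cosine dual activation and aggregates layerwise contributions via Hadamard products and sums. Since the arc-cosine kernel is monotone increasing and nonnegative on $[0,1]$, each per-layer $\mathbf{\Sigma}^{(\ell)}$ and $\dot{\mathbf{\Sigma}}^{(\ell)}$ retains the two-value structure with diagonal strictly exceeding a nonnegative off-diagonal entry, and Hadamard products and sums preserve this structure. The final $\mathbf K_{\mathrm{MLP}}$ therefore satisfies $a > b \geq 0$, giving $c \geq 0$ that depends only on $\ell$ (and not on $n$, since inner products between distinct one-hot inputs are $0$ regardless of $n$).

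The main obstacle will be the bookkeeping in this last step: carefully verifying that the two-value structure and the strict inequality $a > b$ are preserved through the full NTK recursion for arbitrary depth. The other two steps---the factorization of $\mathbf{\Theta}$ via the weight-independent $\mathbf A^{\ell}$ and the reduction to $\mathrm{span}\{\mathbf I_n, \mathbf 1\mathbf 1^{\top}\}$ via permutation symmetry of one-hot inputs---are clean and follow immediately from the definitions.
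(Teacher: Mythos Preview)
Your proposal is correct and follows essentially the same route as the paper: factor the GNTK as $\mathbf A^{\ell}\,\mathbf K_{\mathrm{MLP}}\,\mathbf A^{\ell}$ by pulling the weight-independent $\mathbf A^{\ell}$ outside the Jacobian, then use the symmetry of one-hot inputs to force $\mathbf K_{\mathrm{MLP}}\in\mathrm{span}\{\mathbf I_n,\mathbf 1\mathbf 1^{\top}\}$, and finally substitute into the residual recurrence. The paper phrases the symmetry step as a permutation-invariance lemma (showing $\mathbf\Theta^{(1)}=\mathbf I_n$ is invariant under row-interchange matrices and inducting through the recursion), whereas you invoke the equivalent fact that the MLP NTK is a dot-product kernel and one-hot inputs have inner products $\delta_{ij}$; these are the same argument in different clothing. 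Your inductive verification that $a>b\ge 0$ (hence $c\ge 0$) is actually more detailed than the paper, which simply asserts that diagonal values exceed off-diagonal values without elaboration.
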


\paragraph{Linear GNNs.}
Another simple and interesting example is linear GNN (specifically SGC~\citet{wu2019simplifying}), whose training dynamics is equivalent to a special case of RP in (\ref{eqn_rp}):


\begin{corollary}[One-Layer GNN] \label{thm_lineargnn}
The training dynamics of the linear GNN $[\mathbf F, \mathbf F'] = \mathbf A^\ell \bar{\mathbf X}\mathbf W$ is identical to the basic version of RP in (\ref{eqn_rp}) with $K = 2\ell$ for input features $\bar{\mathbf X} = \mathbf I_{n}$, and $K = 2\ell + 1$ for $\bar{\mathbf X} = \mathop{\arg\min}_{\mathbf B} \|\mathbf A - \mathbf B \mathbf B^\top\|_F^2$ and positive semi-definite $\mathbf A$. (Proof in Appendix~\ref{app_lineargnn})
\end{corollary}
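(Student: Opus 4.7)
The plan is to invoke the general training dynamics equation (\ref{eqn_inference}) and show that, for a linear GNN, the NTK collapses to a fixed power of the adjacency matrix, so that (\ref{eqn_inference}) becomes exactly (\ref{eqn_rp}) with the claimed exponent $K$. Since $f(\boldsymbol x;\mathbf A) = \mathbf A^\ell \bar{\mathbf X}\mathbf W$ is linear in $\mathbf W$, the Jacobian $\nabla_{\mathbf W}[\mathbf F_t,\mathbf F'_t]$ is independent of $t$ and equals $\mathbf A^\ell \bar{\mathbf X}$ (up to a tensorization in the output dimension, which I handle as in Appendix~\ref{app_multioutput}). Consequently $\mathbf \Theta_t = \mathbf A^\ell \bar{\mathbf X}\bar{\mathbf X}^\top \mathbf A^\ell$ for all $t$, using that $\mathbf A$ is symmetric so $(\mathbf A^\ell)^\top = \mathbf A^\ell$.

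Next I would handle the two input choices separately. For $\bar{\mathbf X}=\mathbf I_n$ the Gram matrix $\bar{\mathbf X}\bar{\mathbf X}^\top$ is the identity, so $\mathbf \Theta_t = \mathbf A^{2\ell}$ and (\ref{eqn_inference}) reduces verbatim to (\ref{eqn_rp}) with $K=2\ell$. For $\bar{\mathbf X} = \arg\min_{\mathbf B}\|\mathbf A-\mathbf B\mathbf B^\top\|_F^2$ and PSD $\mathbf A$, the minimum of the Frobenius objective is zero and is attained at any square-root factorization of $\mathbf A$; hence $\bar{\mathbf X}\bar{\mathbf X}^\top = \mathbf A$. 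Substituting gives $\mathbf \Theta_t = \mathbf A^\ell \mathbf A \mathbf A^\ell = \mathbf A^{2\ell+1}$, so (\ref{eqn_inference}) reduces to (\ref{eqn_rp}) with $K=2\ell+1$.

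The only slightly subtle point, and the main thing to be careful about, is the PSD assumption in the second case: without it the factorization $\bar{\mathbf X}\bar{\mathbf X}^\top=\mathbf A$ is not exact and one would instead obtain the best rank-$d$ PSD approximation via the truncated eigendecomposition of $\mathbf A$, introducing an approximation term. Under the stated PSD hypothesis this concern vanishes, and the argument collapses to two one-line substitutions. I would close the proof by noting that the resulting recursion is indeed the basic RP update of (\ref{eqn_rp}), so the two trajectories $\{\mathbf R_t,\mathbf R'_t\}$ coincide for every $t$ given identical initialization $\mathbf F_0 = \mathbf F'_0 = \mathbf 0$, which holds for both the linear GNN (under the standard NTK parameterization, whose output vanishes at initialization in expectation and exactly in the infinite-width limit) and for RP by definition.
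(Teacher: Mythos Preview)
Your proposal is correct and follows essentially the same route as the paper: compute the constant NTK $\mathbf\Theta_t = \mathbf A^\ell \bar{\mathbf X}\bar{\mathbf X}^\top\mathbf A^\ell$, reduce $\bar{\mathbf X}\bar{\mathbf X}^\top$ to $\mathbf I_n$ (resp.\ $\mathbf A$) in the two input regimes, and substitute into (\ref{eqn_inference}) to recover (\ref{eqn_rp}) with $K=2\ell$ (resp.\ $K=2\ell+1$). One small caveat on your closing remark: the model $\mathbf A^\ell\bar{\mathbf X}\mathbf W$ has no hidden width, so the ``infinite-width limit'' justification for zero initialization does not apply here; the paper's proof sidesteps this point entirely by identifying only the update recursion, not the full trajectory from a matched initial condition.
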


\begin{remark}
Despite this equivalence, it is important to note that this specific linear GNN (on our given input features) is not as lightweight as it may appear, since its parameter number scales with the size of dataset (even reaching orders of magnitude larger than deep GNN models), and the full-rank spectral decomposition of $\mathbf A$ is computationally very expensive (i.e. $n^3$) for large graphs. In contrast, RP efficiently yields identical results to this heavily parameterized GNN without actually training parameters or decomposing matrix. In practice, RP also consistently performs better than linear GNN, since real-world input feature is commonly a tall matrix (i.e. $n>d$), which makes linear models sub-optimal (see details in Appendix). 
\end{remark}

\update{Note that there might exist many other examples where the NTK of GNNs is identical to special forms of adjacency. An exhaustive list of them is prohibitive. We refer readers to Appendix~\ref{app_decoupled} for analysis of another type of deep GNN where feature propagation is applied at the last layer (e.g. APPNP~\citet{klicpera2018predict}).}


\begin{figure*}[t]
\centering
\begin{minipage}[t]{\linewidth}
\begin{minipage}[t]{0.33\linewidth}
\centering
\includegraphics[width=0.99\textwidth]{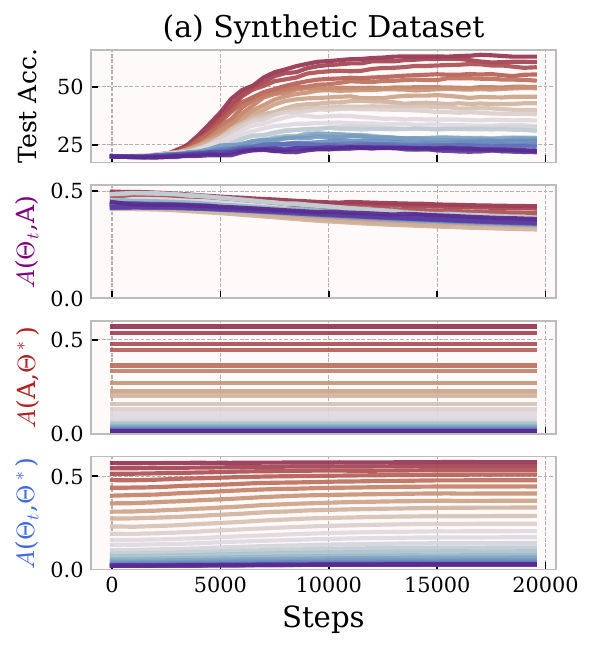}
\end{minipage}%
\begin{minipage}[t]{0.33\linewidth}
\centering
\includegraphics[width=0.99\textwidth]{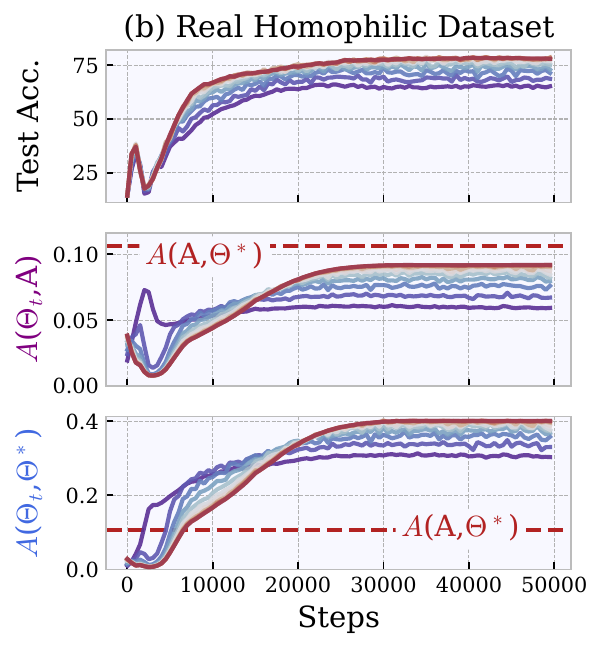}
\end{minipage}%
\begin{minipage}[t]{0.33\linewidth}
\centering
\includegraphics[width=0.99\textwidth]{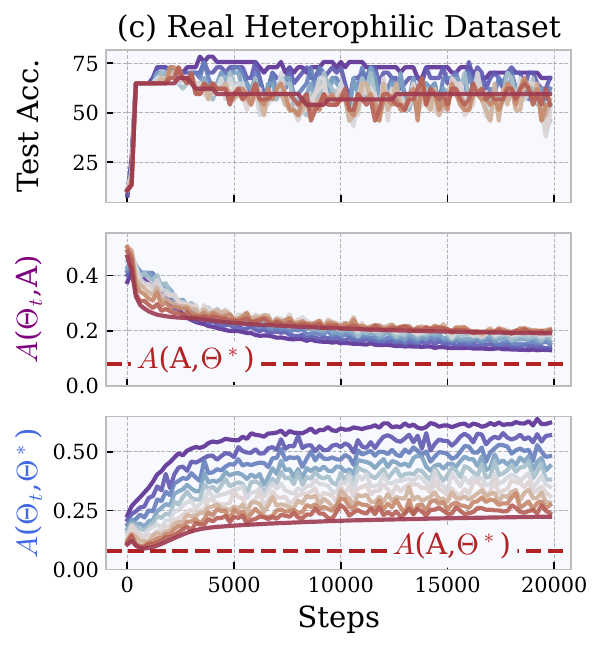}
\end{minipage}%
\end{minipage}
\caption{Evolution of NTK matrix $\mathbf \Theta_t$ of GCN during training, reflected by matrix alignment. \textbf{(a)} Synthetic dataset generated by a stochastic block model, where the homophily level gradually decreases by altering edge probabilities, i.e. {\color{Maroon}homophilic} $\rightarrow$ {\color{BlueViolet}heterophilic}; \textbf{(b \& c)} Real-world homophilic (\texttt{Cora}) and heterophilic (\texttt{Texas}) datasets, where the graph is gradually coarsened until there is no edge left when evaluating $\mathbf \Theta_t$, i.e. {\color{Maroon}more graph} $\rightarrow$ {\color{BlueViolet}less graph}. (Details in Appendix~\ref{app_detail2})}
\label{fig_theory}
\end{figure*}

\section{Generalization and Heterophily} \label{sec5}

In this section, we offer interpretable explanations of ``when and why GNNs successfully generalize" and their pathological training behavior on heterophilic graphs. We also study the time evolution of real-world GNN NTKs to further empirically verify our theoretical results.

\subsection{When and Why GNNs Generalize?} \label{sec_generalization}

Our previous discussions have revolved around two matrices, namely the graph adjacency matrix $\mathbf A$ and the NTK matrix $\mathbf \Theta_t$.\footnote{We here refer to $\mathbf A$ as a class of similarity matrices based on original $\mathbf A$ in a general sense, such as $\mathbf A^K$ etc.} To complete the theoretical picture, we introduce another matrix called the ideal \emph{or optimal kernel matrix}~\citep{cristianini2001kernel}, defined as $\mathbf \Theta^* \triangleq \bar{\mathbf Y} \bar{\mathbf Y}^\top \in \mathbb R^{n\times n}$ to indicate whether two instances have the same label, and a metric to quantify alignment of (kernel) matrices:
\begin{definition}[Alignment, \citet{cristianini2001kernel}] Given two (kernel) matrices $\mathbf K_1$ and $\mathbf K_2$, their alignment is defined as $A\left(\mathbf K_1, \mathbf K_2\right)\triangleq{\left \langle \mathbf K_1, \mathbf K_2\right\rangle_F}/{(\|\mathbf K_1\|_F \|\mathbf K_2\|_F)} \in[0,1]$. This is a generalization of cosine similarity from vectors to matrices, $\operatorname{arccos}$ of which satisfies the triangle inequality.
\end{definition}
\red{$\circ\;$ \emph{Homophily Level}: $A(\mathbf A, \mathbf \Theta^*)$}. The alignment between $\mathbf A$ and $\mathbf \Theta^*$ quantifies the \emph{homophily level} of graph structure, i.e. whether two connected nodes indeed have the same label, and is determined the dataset. While many empirical results (e.g. \citet{zhu2020beyond}) suggest high homophily level is important for the performance of GNNs, deeper theoretical understandings are mostly lacking.

\blue{$\circ\;$ \emph{Kernel-Target Alignment}: $A(\mathbf \Theta_t, \mathbf \Theta^*)$}. The alignment between kernel matrix and optimal $\mathbf \Theta^*$ has been widely studied and used as an objective for learning kernels functions~\citep{cristianini2001kernel,kwok2003learning,lanckriet2004learning,gonen2011multiple}. Better kernel-target alignment has been recognized as a critical factor that leads to favorable generalization for classic non-parametric models. For intuition in our case, one can quickly verify that substituting $\mathbf \Theta^*$ to the training dynamics in (\ref{eqn_inference}) leads to perfect generalization performance (since ground-truth labels only propagate to unseen instances with the same label).

\purple{$\circ\;$ \emph{Kernel-Graph Alignment}: $A(\mathbf \Theta_t, \mathbf A)$}. The alignment between NTK and graph is a novel notion in our work, as prior sections have shown that GNN NTK matrices naturally tend to align with $\mathbf A$. The RP algorithm (and variants thereof) serve as an extreme case with two identical matrices.

\textbf{Implications.} We consider two cases. For \emph{homophilic} graphs where \red{$A(\mathbf A, \mathbf \Theta^*) \uparrow$} is naturally large, better kernel-graph alignment \purple{$A(\mathbf \Theta_t, \mathbf A) \uparrow$} consequently leads to better kernel-target alignment \blue{$A(\mathbf \Theta_t, \mathbf \Theta^*) \uparrow$}. In other words, the NTK of GNNs naturally approaches the optimum as the graph structure possesses homophily property, and leveraging it in the optimization process (\ref{eqn_inference}) encourages training residuals to flow to unseen samples with the same label and thus better generalization; In contrast, for \emph{heterophilic} graphs where {$A(\mathbf A, \mathbf \Theta^*)$} is small, better kernel-graph alignment will hinder kernel-target alignment, explaining the pathological learning behavior of GNNs when dealing with heterophilic graphs in an interpretable manner. 

\subsection{Theoretical Results}

To support this interpretation, we examine the generalization behavior of infinitely-wide neural networks in the extreme case where its NTK matrix is governed by the graph, say, {\small$\lim _{k \rightarrow \infty} \sum_{i=0}^{k}(\alpha \mathbf A)^i$} with $\alpha \in (0,1)$ as adopted by the converged LP algorithm in (\ref{eqn_lp}). With a common assumption that training instances are drawn i.i.d.~from a distribution $\mathcal P$, and based on the Rademacher complexity generalization bound for kernel regression~\citep{bartlett2002rademacher,arora2019fine,du2019graph}, we have a label-dependent high-probability (at least $1-\delta$) upper bound on population risk (derivation in Appendix~\ref{app_gen_bound}) $\mathbb{E}_{(\boldsymbol x, y) \sim \mathcal{P}}\left[l\left(f(\boldsymbol x), y\right)\right] =$
\begin{equation} \label{eq_genbound}
\mathcal O\left(\sqrt{1 - c n_l^{-1} \red{A(\mathbf A, \mathbf \Theta^*)}}+\sqrt{n_l^{-1} \log (\delta^{-1})}\right),
\end{equation}
where $c = \alpha \|\mathbf \Theta^*\|_F \|\mathbf A\|_F$ is a constant, $A(\mathbf A, \mathbf \Theta^*)$ is the homophily level for the training set (with slight abuse of notation). This bound can also be viewed as a theoretical guarantee for the converged (generalized) RP algorithm and clearly demonstrates that:  \textit{for\purple{$A(\mathbf \Theta_t, \mathbf A)$} fixed as $1$, higher level of graph homophily \red{$A(\mathbf A, \mathbf \Theta^*)$} plays a dominant role in better generalization}. The above analysis can also be potentially extended to infinitely-wide GNNs discussed in Sec.~\ref{sec_gnndynamics} with further assumptions on the condition number of its NTK matrix, which will lead to similar (but less straightforward) results.

Pushing further, in a complementary setting to the above, where the objective is to find the optimal a priori kernel for directly minimizing the population risk (but without access to any input features or ground-truth labels), we demonstrate that when homophily assumptions are imposed on the graph, infinitely-wide GNNs and the RP algorithm will yield the optimal kernel regression predictor with the provably best generalization performance:

\begin{theorem} [Bayesian Optimality of GNN] \label{thm_gnnoptimal} 
Assume that the underlying data generation distribution $\mathcal P$ is such that the probability of a pair of instances having the same label $P(y(\boldsymbol x_i) = y(\boldsymbol x_j))$ is proportional to $\mathbf A_{ij} - {1}/{2}$. Then the optimal kernel regression predictor that minimizes the population risk with squared loss has kernel matrix $\mathbf A$.
\end{theorem}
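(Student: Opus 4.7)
The plan is to reduce the problem of optimizing over kernels to the classical best-linear-predictor problem under squared loss, and then to show that $\kappa = \mathbf A$ exactly realizes the optimal weights. First I would translate the probabilistic hypothesis into a second-moment statement: taking binary labels $y_i \in \{-1, +1\}$ with balanced marginals so that $\mathbb E[y_i] = 0$, the identity $\mathbb E[y_i y_j] = 2 P(y_i = y_j) - 1$ turns the assumption $P(y_i = y_j) \propto \mathbf A_{ij} - 1/2$ into $M := \mathbb E[\bar{\mathbf Y}\bar{\mathbf Y}^\top] = c\,\mathbf A$ for some $c > 0$. In other words, the \emph{expected} optimal kernel matrix $\mathbf \Theta^{*} = \bar{\mathbf Y}\bar{\mathbf Y}^\top$ coincides with $\mathbf A$ up to a positive scalar, which is the key structural fact driving the rest of the argument.

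Next I would observe that the kernel regression predictor $\hat f(\boldsymbol x') = \kappa(\boldsymbol x', \mathbf X)\,\kappa(\mathbf X,\mathbf X)^{-1}\mathbf Y$ is linear in $\mathbf Y$ with coefficient vector $w(\boldsymbol x') := \kappa(\mathbf X,\mathbf X)^{-1}\kappa(\mathbf X, \boldsymbol x') \in \mathbb R^{n_l}$, and that the induced map $\kappa \mapsto w(\boldsymbol x')$ is surjective onto $\mathbb R^{n_l}$, so optimizing the population risk over kernels is equivalent to optimizing over linear predictors. Taking expectation over the random labels gives a convex quadratic
\begin{equation*}
\mathbb E\bigl[(\hat f(\boldsymbol x') - y')^2\bigr] = w(\boldsymbol x')^\top M_{\mathbf X\mathbf X}\, w(\boldsymbol x') - 2\,w(\boldsymbol x')^\top M_{\mathbf X\boldsymbol x'} + M_{\boldsymbol x'\boldsymbol x'},
\end{equation*}
whose unique minimizer (assuming $M_{\mathbf X\mathbf X}$ is positive definite) is the classical best-linear-predictor $w^{*}(\boldsymbol x') = M_{\mathbf X\mathbf X}^{-1} M_{\mathbf X\boldsymbol x'}$. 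Choosing $\kappa = \mathbf A$ yields $w(\boldsymbol x') = \mathbf A_{\mathbf X\mathbf X}^{-1}\mathbf A_{\mathbf X\boldsymbol x'}$, and the proportionality $M = c\,\mathbf A$ makes the scalar cancel out of the inverse, so this coincides with $w^{*}(\boldsymbol x')$ and therefore attains the pointwise minimum. Integrating over the test distribution aggregates the pointwise bound into the population risk statement.

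The subtle part, and what I anticipate to be the main obstacle, is the translation in the first step: the phrase ``proportional to $\mathbf A_{ij} - 1/2$'' has to be reconciled with the constraint that probabilities lie in $[0,1]$, so one must fix conventions (balanced binary labels, a specific proportionality constant, and possibly a re-centering of $\mathbf A$ so that diagonal entries correspond to $P(y_i = y_i) = 1$) before the identification $M \propto \mathbf A$ becomes clean. A secondary technicality is the potential singularity of $\mathbf A_{\mathbf X\mathbf X}$ (e.g.\ isolated nodes or rank-deficient block structure), which would require either an added positive-definiteness assumption on $\mathbf A$, a small ridge regularizer replacing $\mathbf A_{\mathbf X\mathbf X}$ with $\mathbf A_{\mathbf X\mathbf X} + \epsilon \mathbf I$, or a Moore--Penrose pseudo-inverse argument to keep the kernel regression formula well-defined.
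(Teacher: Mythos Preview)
Your proposal is correct and follows essentially the same route as the paper's own proof: translate the label-agreement assumption into the second-moment matrix $\mathbf\Sigma = \mathbb E[\mathbf Y\mathbf Y^\top]$, write the kernel-regression prediction as a linear functional $\mathbf M_{\boldsymbol x}^\top \mathbf Y$ of the labels, minimize the resulting quadratic in $\mathbf M_{\boldsymbol x}$ to obtain $\mathbf M_{\boldsymbol x}^\top = \mathbf\Sigma(\boldsymbol x,\mathbf X)\mathbf\Sigma(\mathbf X,\mathbf X)^{-1}$, and read off the optimal kernel. The paper fixes the convention $P(y_i=y_j)=\mathbf A_{ij}$ so that $\mathbf\Sigma = 2\mathbf A - \mathbf 1\mathbf 1^\top$ and states the optimal kernel as $2\mathbf A - 1$ (the remark after the theorem concedes the exact form depends on the assumed relation), whereas you aim for $M = c\,\mathbf A$; this is only a difference in how the loosely-worded proportionality is resolved, and you already flag exactly this reconciliation as the main subtlety.
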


\begin{remark}
See proof in Appendix~\ref{app_gnnoptimal}.
The matrix $\mathbf A$ from the above result could vary depending on different assumptions on $P(y(\boldsymbol x_i) = y(\boldsymbol x_j))$, such as {$\mathbf A^K$} for the basic version of RP, or $\mathbf{A}\left(\mathbf{A}^2 \odot \mathbf{S}\right) \mathbf{A}$ for infinitely-wide two-layer GNNs in Theorem~\ref{thm_twolayergnn}. And ideally, if one has privileged access to labels of all instances, this optimal matrix is exactly the optimal (posterior) kernel $\mathbf \Theta^*$ discussed above and perfect generalization will be achieved. This result further verifies our interpretation of generalization by showing that \emph{for \red{$A(\mathbf A, \mathbf \Theta^*)$} fixed to be large, better \purple{$A(\mathbf \Theta_t, \mathbf A)$} (e.g. GNN and RP) leads to favorable generalization.}
\end{remark}

\section{Empirical Verification} \label{sec_exp4theory}

We next empirically verify whether real-world GNN NTKs indeed align with the graph during training and its effect on generalization.
Figure~\ref{fig_theory} plots the results reflected by alignment between $\mathbf \Theta_t$, $\mathbf A$ and $\mathbf \Theta^*$.

\textbf{Synthetic Dataset.}\quad To demonstrate the effects of different homophily levels, we use stochastic block models~\citep{holland1983stochastic} to generate a synthetic dataset. As shown in Fig.~\ref{fig_theory}(a), the kernel-graph alignment \purple{$A(\mathbf \Theta_t, \mathbf A)$} for GNNs stays at a high level regardless of different graph structures, as a natural result of the network architecture (Sec.~\ref{sec_gnndynamics}). Consequently, as we gradually alter edge probabilities to decrease the homophily level \red{$A\left(\mathbf{A}, \mathbf{\Theta}^*\right)$}, the kernel-target alignment \blue{$A\left(\mathbf \Theta_t, \mathbf \Theta^*\right)$} also decreases and the testing accuracy drops dramatically (Sec.~\ref{sec_generalization}).

\textbf{Real-World Datasets.}\quad On real-world homophilic and heterophilic datasets, we progressively coarsen the graph until the GNN degrades to an MLP, allowing us to analyze the effects of feature propagation in the model. For now, let us first focus on comparing {\color{Maroon}red} and {\color{BlueViolet}blue} lines in Fig.~\ref{fig_theory}(b,c). We found the kernel-graph alignment \purple{$A(\mathbf \Theta_t, \mathbf A)$} overall decreases with less graph structure, again verifying results in Sec.~\ref{sec_gnndynamics}. However, the impact of the graph structure depends on different homophily levels \red{$A\left(\mathbf{A}, \mathbf{\Theta}^*\right)$}: on the homophilic dataset, more graph structure and better $A(\mathbf \Theta_t, \mathbf A)$ optimize the NTK matrix as reflected by better kernel-target alignment \blue{$A\left(\mathbf \Theta_t, \mathbf \Theta^*\right)$}, but worsens it on the heterophilic one. This results in distinct generalization behavior of GNNs reflected by the testing accuracy.

\textbf{Can and How (G)NNs Handle Heterophily?}\quad
Recently, there has been a growing discussion on whether standard GNNs are capable of handling heterophily, with empirical results pointing to diverging conclusions~\citep{zhu2020beyond,ma2021homophily,luan2022revisiting,platonov2023critical}. From the training dynamics perspective, we add new insights to this debate: 

\textbf{$\circ$ 1)} While we have shown that infinitely-wide GNNs are sub-optimal on heterophilic graphs, MLPs in contrast have no guarantee of kernel-target alignment (since both $A(\mathbf A, \mathbf \Theta^*)$ and $A(\mathbf \Theta_t, \mathbf A)$ are not well-aligned), indicating that \emph{they could either generalize better or worse without clearcut (dis)advantages, as opposed to the case on homophilic graphs where GNN is provably at an advantage}, which explains the diverse empirical results when comparing GNNs with MLPs in prior work. 

\textbf{$\circ$ 2)} As we now turn to the NTK's time evolution in Fig.~\ref{fig_theory}, an additional phenomenon we found across all datasets is the overall increase of kernel-target alignment \blue{$A\left(\mathbf \Theta_t, \mathbf \Theta^*\right)$} during the training process. This indicates that the training process enables real-world GNNs to adjust their NTK's feature space such that the kernel matrix leans towards an ultimately homoplilic graph structure (i.e. $\mathbf \Theta^*$) to adapt to heterophilic datasets (and consequently different evolutionary trends of \purple{$A\left(\mathbf \Theta_t, \mathbf A\right)$} for hemophiliac and heterophilic datasets). Such a phenomenon has also been found for other models in vision tasks~\citep{baratin2021implicit}.

Additionally, since our analysis also applies to other forms of $\mathbf A$ for feature propagation in GNNs, it could also potentially explain how some specialized models with different instantiations of $\mathbf A$ (e.g. signed propagation) can mitigate the heterophily issue. {We also refer readers to Appendix~\ref{app_heterophily} for a discussion of how our insights can be potentially used to guide designing new GNN architectures for handling heterophily.}

\section{Discussions} \label{sec_discussion}

\paragraph{Abridged Related Work.} Most existing work on theoretical aspects of GNNs focuses on representation and generalization (see \citet{jegelka2022theory} and references therein) while their optimization properties remains under-explored~\citep{zhang2020fast,xu2021optimization,yadati2022convex,huang2023graph}. Specifically, existing work in representation (or expressiveness) does not provide answers to what exactly GNN functions are found during the optimization process. For generalization, prior work is insufficient to explain the effects of training, which is widely-recognized as a crucial ingredient, and does not connect to heterophily, a relevant aspect in practice. (See a comprehensive discussion of related work in Appendix~\ref{app_relatedwork})

\textbf{Applicability and Future Directions.}\quad Our insights apply to both transductive and inductive settings (Appendix~\ref{app_induc_trans}), other loss functions (Appendix~\ref{app_loss}), multi-dimensional outputs (Appendix~\ref{app_multioutput}), different GNN (or graph Transformer) architectures that conform to our definition. The analytical framework could also be extended to other tasks (which are left as future work); for instances, for graph classification or regression, the definition of GNTK in (\ref{eqn_inference}) should be modified to the graph-level one~\citep{du2019graph}; for link prediction and self-supervised learning, the loss function in the derivation of (\ref{eqn_inference}) should be adjusted accordingly. A similar message passing process in function space would still hold in these settings. The framework could also be potentially adapted to analyze other GNN issues, such as over-smoothing and over-squashing, though specific considerations may be required. While this paper focuses specifically on studying the role of graph structures in optimization, the framework can potentially also be used to study the effects of different model architectures by noting that the definition of $\mathbf{A}$ in the paper is associated with GNN architectures (\ref{eqn_layer_gcn}). 


\paragraph{}

\section*{Acknowledgements}

The authors from SJTU were in part supported by NSFC (92370201, 62222607, 72342023). The author from CUHKSZ was supported by NSFC (12326608), the National Key Research and Development Project under grant 2022YFA1003900 and Hetao Shenzhen-Hong Kong Science and Technology Innovation Cooperation Zone Project (No.HZQSWS-KCCYB-2024016).

\section*{Impact Statement}

Theoretically, our analytical framework from the training dynamics perspective might have broader impacts beyond graph learning as it exemplifies using kernel learning to justify model architectures. And practically, the proposed RP also unlocks the possibility of leveraging sparse structures in training dynamics to develop efficient algorithms. Similar to other works in machine learning, this work may inevitably have negative society impacts if machine learning were to be employed for unethical purposes.

\bibliography{example_paper}
\bibliographystyle{icml2024}

\newpage
\appendix
\onecolumn


\section{Unabridged Related Work}\label{app_relatedwork}
This section discusses more related works that are not covered in the main text and those related works that are already covered but in greater depth.

\subsection{Optimization and Training Dynamics}
Towards deeper understanding of the success and limitation of GNNs, many works focus on the representation power of GNNs~\citep{maron2019provably,xu2018powerful,oono2019graph,chen2019equivalence,dehmamy2019understanding,sato2019approximation,loukas2020hard}. While these works formalize what functions a GNN can possibly represent, they do not provide answers to which specific GNN function will be found during optimization process or whether the learned GNN function will successfully generalize. In contrast, theoretical understandings of the optimization properties of GNNs are scarce. Specifically, \citet{zhang2020fast} prove the global convergence of a one-layer GNN with assumptions on the training algorithm built on tensor initialization and accelerated gradient descent; \citet{xu2021optimization} analyze the convergence rate of linearized GNNs with focus on training dynamics in weight space and empirical risk; \citet{yadati2022convex} analyze optimization properties of a two-layer GCN by introducing a convex program; \citet{huang2023graph} characterize signal learning and noise memorization in two-layer GCN and compare it to CNNs. However, none of existing studies study the training dynamics of GNNs in function space, which is our focus and could lead to a wealth of new insights that are theoretically and practically valuable. 



\subsection{In- and Out-of-Distribution Generalization}
Existing works in generalization of GNNs focus on their in-distribution~\citep{scarselli2018vapnik,verma2019stability,du2019graph,liao2020pac} and out-of-distribution generalization properties~\citep{yehudai2021local,xu2020neural,wu2022handling,yang2022graph}. For \emph{node-level tasks} which are challenging due to the dependency between samples, most prior art study the generalization bound of GNNs based on complexity of model class~\citep{scarselli2018vapnik,baranwal2021graph,garg2020generalization,ma2021subgroup} or algorithmic stability~\citep{verma2019stability,zhou2021generalization,cong2021provable}. The former line of works do not consider the optimization, which however is a critical ingredient of finding generalizable solutions; the latter line of works consider the training algorithm, but their bounds as the number of epochs becomes large. Moreover, there is no existing work (to the best of our knowledge) formally connecting generalization and heterophily, though their connections are tacitly implied in many empirical results, e.g.~\citep{zhu2020beyond,zhu2021graph,zheng2022graph}. The most related work in GNN generalization is~\citep{yang2022graph} wherein the authors found vanilla MLPs with test-time message passing operations can be as competitive as different GNN counterparts, and then analyzed generalization of GNNs using node-level GNTK with attention to feature-wise extrapolation. Our results agree with and complement \citep{yang2022graph} by noting that our analysis is applicable for all inductive, transductive and training without graph settings (cf. Appendix~\ref{app_induc_trans}).

\subsection{Graph-Based Semi-Supervised Learning (Label Propagation)} \label{app_rw_lp}
One of most popular type of semi-supervised learning methods is graph-based methods~\citep{chapelle2009semi}, where label propagation~\citep{szummer2001partially,zhu2002learning,zhu2003semi,zhou2003learning,chapelle2009semi,koutra2011unifying,gatterbauer2015linearized,yamaguchi2016camlp,iscen2019label} is one of the most classic and widely-used algorithms. The algorithm relies on external data structure, usually represented by a graph adjacency matrix $\mathbf A \in \{0,1\}^{n\times n}$, to propagate ground-truth labels of labeled samples to infer unlabeled ones, and it is still under active research nowadays, e.g.~\citep{pukdee2023label,lee2022ultraprop}, mainly due to its efficiency and scalability. The label propagation algorithm can usually be induced from minimization of a quadratic objective~\citep{zhou2003learning}:
\begin{equation}
\mathcal E =\|{\mathbf F} - \mathbf Y\|^2 + \lambda \operatorname{Tr}\left[[{\mathbf F}, \mathbf F']^\top (\mathbf I_n - \mathbf A) ~[{\mathbf F}, \mathbf F']\right], 
\end{equation}
motivated from the Dirichlet energy to enforce smoothness of predictions according to sample relations. Notice the minima of this objective does not minimize the squared loss part $\|{\mathbf F} - \mathbf Y\|^2$ due to the regularization of the second term. In contrast, the proposed RP algorithm has similar smoothness effects but minimizes the squared loss. Moreover, some previous works attempt to explore the interconnections between LP and GNNs from different perspectives such as feature/label influence~\citep{wang2021combining}, generative model~\citep{jia2022unifying}. Compared with them, we first show the exact equivalence of classification results between LP and the first step training of GNN (on node embeddings and with squared loss).

\section{Theoretical Results} \label{app_theory}
\subsection{Derivation of \eqref{eqn_nndynamics} and \eqref{eqn_inference}: Training Dynamics of General Parameterized Model} \label{proof_rp_ntk}
Recall that for supervised learning, one is interested in minimizing the squared error $\mathcal L$ using \emph{gradient descent (GD)},
\begin{equation} 
\mathcal L = \frac{1}{2}\Vert \mathbf F_t - \mathbf Y \Vert^2, \quad \frac{\partial \mathbf W_t}{\partial t} = - \eta \nabla_{\mathbf W} {\mathcal L}.
\end{equation}
Let $\mathbf R_t = \mathbf Y - \mathbf F_t$ and $\mathbf R'_t = \mathbf Y' - \mathbf F'_t$ denote residuals for the labeled and unlabeled sets. 

For the training set, we have
\begin{align}
&~~~~~~~~ \frac{\partial \mathbf F_t}{\partial t} \\
&=~~~ \frac{\partial \mathbf F_t}{\partial \mathbf W_t} ~ \frac{\partial \mathbf W_t}{\partial t} && \text{\textit{(Chain rule)}} \\
&=~~~ - \eta~ \frac{\partial \mathbf F_t}{\partial \mathbf W_t} ~ \nabla_{\mathbf W} {\mathcal L} && \text{\textit{(GD training)}}\\
&=~~~ -\eta~ \frac{\partial \mathbf F_t}{\partial \mathbf W_t} ~ \nabla_{\mathbf W} \mathbf F_t~ \nabla_{\mathbf F_t} \mathcal L && \text{\textit{(Chain rule)}}\\
&=~~~ -\eta~ {\nabla_{\mathbf W} \mathbf F_t}^\top ~ {\nabla_{\mathbf W} \mathbf F_t}~ {\nabla_{\mathbf F_t} \mathcal L} && \text{\textit{(Change of notation)}} \vphantom{\frac{\partial \mathbf R_t}{\partial \mathbf W_t}} \\
&=~~~ -{\eta}~ \mathbf \Theta_t(\mathbf X, \mathbf X)~ {\nabla_{\mathbf F_t} \mathcal L} && \text{\textit{(Neural tangent kernel)}} \vphantom{\frac{\partial \mathbf R_t}{\partial \mathbf W_t}} \\
&=~~~ \eta ~  \mathbf \Theta_t(\mathbf X, \mathbf X)~ {\mathbf R_t} && \text{\textit{(Loss function)}} \vphantom{\frac{\partial \mathbf R_t}{\partial \mathbf W_t}}
\end{align}

Incorporating the test set, we have
\begin{align}
&~~~~~~~~ \frac{\partial [\mathbf R_t, \mathbf R'_t]}{\partial t} \\
&=~~~ \frac{\partial[\mathbf R_t, \mathbf R'_t]}{\partial \mathbf W_t} ~ \frac{\partial \mathbf W_t}{\partial t} && \text{\textit{(Chain rule)}} \\
&=~~~ - \eta~ \frac{\partial[\mathbf R_t, \mathbf R'_t]}{\partial \mathbf W_t} ~ \nabla_{\mathbf W} {\mathcal L} && \text{\textit{(GD training)}}\\
&=~~~ -\eta~ \frac{\partial [\mathbf R_t, \mathbf R'_t]}{\partial \mathbf W_t} ~ \nabla_{\mathbf W} [\mathbf F_t, \mathbf F'_t]~ \nabla_{[\mathbf F_t, \mathbf F'_t]} \mathcal L && \text{\textit{(Chain rule)}}\\
&=~~~ \eta~ {\nabla_{\mathbf W} [\mathbf F_t, \mathbf F'_t]}^\top ~ {\nabla_{\mathbf W} [\mathbf F_t, \mathbf F'_t]}~ {\nabla_{[\mathbf F_t, \mathbf F'_t]} \mathcal L} && \text{\textit{(Change of notation)}} \vphantom{\frac{\partial \mathbf R_t}{\partial \mathbf W_t}} \\
&=~~~ {\eta}~ \mathbf \Theta_t([\mathbf X, \mathbf X'], [\mathbf X, \mathbf X'])~ {\nabla_{[\mathbf F_t, \mathbf F'_t]} \mathcal L} && \text{\textit{(Neural tangent kernel)}} \vphantom{\frac{\partial \mathbf R_t}{\partial \mathbf W_t}} \\
&=~~~ {\eta}~ \mathbf \Theta_t([\mathbf X, \mathbf X'], [\mathbf X, \mathbf X'])~ {\nabla_{[\mathbf F_t, \mathbf F'_t]} \frac{1}{2}\Vert \mathbf F_t - \mathbf Y \Vert_F^2} && \text{\textit{(Loss function)}} \vphantom{\frac{\partial \mathbf R_t}{\partial \mathbf W_t}} \\
&=~~~ -\eta ~  \mathbf \Theta_t([\mathbf X, \mathbf X'], [\mathbf X, \mathbf X'])~ [{\mathbf R_t}, \mathbf 0] && \text{\textit{(Compute gradient)}} \vphantom{\frac{\partial \mathbf R_t}{\partial \mathbf W_t}}
\end{align}
where 
\begin{equation}
\mathbf \Theta_t([\mathbf X, \mathbf X'], [\mathbf X, \mathbf X']) \triangleq \left[\begin{array}{cc}
\mathbf \Theta_t(\mathbf X, \mathbf X) & \mathbf \Theta_t(\mathbf X, \mathbf X') \\
\mathbf \Theta_t(\mathbf X', \mathbf X) & \mathbf \Theta_t(\mathbf X', \mathbf X')
\end{array}\right]
\end{equation}
is the NTK matrix in $\mathbb R^{n\times n}$. Discretizing the residual dynamics with step size $\Delta t = 1$ and rearranging the equation gives
\begin{equation}
\left[\mathbf R_{t+1}, \mathbf R'_{t+1}\right] = -\eta~ \left[\begin{array}{cc}
\mathbf \Theta_t(\mathbf X, \mathbf X) & \mathbf \Theta_t(\mathbf X, \mathbf X') \\
\mathbf \Theta_t(\mathbf X', \mathbf X) & \mathbf \Theta_t(\mathbf X', \mathbf X')
\end{array}\right]  [\mathbf R_t, \mathbf 0] ~~+~~ \left[\mathbf R_t, \mathbf R'_{t}\right].
\end{equation}
or
\begin{equation}
\left[\mathbf R_{t+1}, \mathbf R'_{t+1}\right] = -\eta~ \mathbf \Theta_t(\bar{\mathbf X}, \bar{\mathbf X}) [\mathbf R_t, \mathbf 0] ~~+~~ \left[\mathbf R_t, \mathbf R'_{t}\right].
\end{equation}

\subsection{Proof of Theorem~\ref{thm_converge}: Convergence of RP, Connection with Kernel Regression} \label{app_converge}
In this proof we examine generic RP iterations of the form given by
\begin{equation} \label{appeq_rp}
\left[\mathbf R_{t+1}, \mathbf R'_{t+1}\right] =  -\eta \mathbf S   [\mathbf R_t, \mathbf 0]  + \left[\mathbf R_t, \mathbf R'_{t}\right],
\end{equation}
where $\mathbf S$ is an arbitrary symmetric matrix as similarity measure ($\mathbf S = \mathbf A^K$ for the basic version of RP in (\ref{eqn_rp})) with block structure aligned with the dimensions of $\mathbf R_{t}$ and $\mathbf R'_{t}$ respectively
\begin{equation}
\mathbf S = \left[\begin{array}{cc} \mathbf S_{\mathbf X \mathbf X} & \mathbf S_{\mathbf X \mathbf X'} \\ \mathbf S_{\mathbf X' \mathbf X} & \mathbf S_{\mathbf X' \mathbf X'} \end{array} \right],
\end{equation}
where $\mathbf S_{\mathbf X\mathbf X} \in \mathbb R^{n_l \times n_l}$ is a principal submatrix of $\mathbf S\in\mathbb R^{n\times n}$ corresponding to the training set. 
According to (\ref{appeq_rp}), the explicit form of training residuals can be written as
\begin{eqnarray} \label{appeq_train_residual}
\mathbf R_{t+1} &=& \left(\mathbf I_{n_l} -\eta \mathbf S_{\mathbf X \mathbf X} \right) \mathbf R_t \nonumber\\
&=& \left(\mathbf I_{n_l} -\eta \mathbf S_{\mathbf X \mathbf X} \right)^{t+1} \mathbf Y
\end{eqnarray}
and testing residuals can be written as 
\begin{eqnarray}\label{appeq_test_residual}
\mathbf R'_{t+1} &=& -\eta \mathbf S_{\mathbf X'\mathbf X} \mathbf R_t + \mathbf R'_t = - \eta \sum_{i=0}^{t} \mathbf S_{\mathbf X' \mathbf X} \mathbf R_i \nonumber\\
&=& - \eta \mathbf S_{\mathbf X' \mathbf X} \sum_{i=0}^{t}  \left(\mathbf I_{n_l} -\eta \mathbf S_{\mathbf X \mathbf X} \right)^{i} \mathbf Y
\end{eqnarray}
To analyze their convergence, we consider three variants based on the property of $\mathbf S_{\mathbf X\mathbf X}$: 1) $\mathbf S_{\mathbf X\mathbf X}$ is positive definite; 2) $\mathbf S_{\mathbf X\mathbf X}$ is positive semi-definite (but not positive definite); 3) $\mathbf S_{\mathbf X\mathbf X}$ is not positive semi-definite. 

For the first variant, we stipulate $\mathbf S_{\mathbf X \mathbf X}$ is positive definite. In this case, sufficiently small step size $\eta < 2/\sigma_{max}[\mathbf S_{\mathbf X \mathbf X}]$, where $\sigma_{max}[\mathbf S_{\mathbf X \mathbf X}]$ is the largest eigenvalue of $\mathbf S_{\mathbf X \mathbf X}$, ensures each diagonal element in $\mathbf I_{n_l} -\eta \mathbf S_{\mathbf X \mathbf X}$ to lie between $(-1,1)$. Therefore, as $t\rightarrow \infty$, the power and geometric series of $\mathbf I_{n_l} -\eta \mathbf S_{\mathbf X \mathbf X}$ converge to
\begin{equation}
\begin{split}
\left(\mathbf I_{n_l} -\eta \mathbf S_{\mathbf X \mathbf X} \right)^{t+1} &\rightarrow \mathbf Q \mathbf (\mathbf I_{n_l} - \eta \mathbf \Lambda[\mathbf S_{\mathbf X \mathbf X}])^{\infty} \mathbf Q^{-1} = \mathbf 0,\\
\sum_{i=0}^{t}  \left(\mathbf I_{n_l} -\eta \mathbf S_{\mathbf X \mathbf X} \right)^{i} &\rightarrow (\mathbf I_{n_l} - (\mathbf I_{n_l} - \eta \mathbf S_{\mathbf X \mathbf X}))^{-1} = (\eta \mathbf S_{\mathbf X \mathbf X})^{-1}.
\end{split}
\end{equation}
Plugging them back into (\ref{appeq_train_residual}) and (\ref{appeq_test_residual}) gives us
\begin{equation}
\begin{split}
\mathbf R_t \rightarrow \mathbf 0,\quad \mathbf R'_t \rightarrow  -\mathbf S_{\mathbf X' \mathbf X} \mathbf S_{\mathbf X \mathbf X}^{-1} \mathbf Y.
\end{split}
\end{equation}
Correspondingly, by noting that $\mathbf Y' = \mathbf 0$ at initialization, we have 
\begin{equation} \label{appeq_converge_pd}
\begin{split}
\mathbf F_t \rightarrow \mathbf Y, \quad \mathbf F'_t \rightarrow  \mathbf S_{\mathbf X'\mathbf X} \mathbf S_{\mathbf X\mathbf X}^{-1} \mathbf Y.
\end{split}
\end{equation}
Namely, for positive definite $\mathbf S_{\mathbf X\mathbf X}$, the converged model predictions $\mathbf F_{\infty}$ perfectly fit the training labels $\mathbf Y$, and we further assume that $\mathbf S$ is positive definite, \eqref{appeq_converge_pd} is equivalent to the solution of kernel regression with respect to the kernel $\kappa(\boldsymbol x, \boldsymbol x') = \mathbf A^K_{\boldsymbol x \boldsymbol x'}$.

For the second variant, we stipulate that $\mathbf S_{\mathbf X \mathbf X}$ is positive semi-definite (but not positive definite), which implies that $\mathbf S_{\mathbf X \mathbf X} = \mathbf B \mathbf B^\top$ for some tall matrix $\mathbf B\in \mathbb R^{n_l\times r}$ that is full column rank. Suppose $\mathbf P_{C(\mathbf B)} = \mathbf B(\mathbf B^\top \mathbf B)^{-1} \mathbf B^\top$ and $\mathbf P_{N(\mathbf B^\top)} = \mathbf I_{n_l} - \mathbf P_{C(\mathbf B)}$ denotes the projection matrices onto the column space of $\mathbf B$ and null space of $\mathbf B^\top$. 
By their construction,
\begin{equation}
\mathbf P_{C(\mathbf B)} \mathbf B = \mathbf B, \quad \mathbf P_{N(\mathbf B^\top)} \mathbf B = \mathbf 0.
\end{equation}

For training residuals, we may form the decomposition
\begin{eqnarray} \label{appeq_train_rp_converge}
    \mathbf R_{t+1} &=& (\mathbf I_{n_l} - \eta\mathbf B\mathbf B^\top)^{t+1} \mathbf Y \nonumber\\
    &=& (\mathbf I_{n_l} - \eta\mathbf B\mathbf B^\top)^{t+1} \mathbf P_{ C(\mathbf B)} \mathbf Y +(\mathbf I_{n_l} - \eta\mathbf B\mathbf B^\top)^{t+1} \mathbf P_{N(\mathbf B^\top)}\mathbf Y
\end{eqnarray}
It follows that, for the first term on the RHS of (\ref{appeq_train_rp_converge}), 
\begin{eqnarray}
    (\mathbf I_{n_l} - \eta\mathbf B\mathbf B^\top)^{t+1} \mathbf P_{ C(\mathbf B)} \mathbf Y &=&  (\mathbf I_{n_l} - \eta\mathbf B\mathbf B^\top)^{t+1} \mathbf B \mathbf B^\dagger  \mathbf Y \nonumber\\
    &=&  (\mathbf I_{n_l} - \eta\mathbf B\mathbf B^\top)^{t} (\mathbf I_{n_l} - \eta\mathbf B\mathbf B^\top) \mathbf B \mathbf B^\dagger  \mathbf Y \nonumber\\
    &=&  (\mathbf I_{n_l} - \eta\mathbf B\mathbf B^\top)^{t} \mathbf B (\mathbf I_r - \eta\mathbf B^\top\mathbf B)  \mathbf B^\dagger  \mathbf Y \nonumber\\
    &=&  \mathbf B (\mathbf I_r - \eta\mathbf B^\top\mathbf B)^{t+1}  \mathbf B^\dagger  \mathbf Y \nonumber\\
    &\rightarrow& \mathbf 0,
\end{eqnarray}
where $\mathbf B^\dagger = (\mathbf B^\top \mathbf B)^{-1} \mathbf B^\top$ is the pseudo inverse of $\mathbf B$. The convergence also requires $\eta < 2/\sigma_{max}[\mathbf S_{\mathbf X \mathbf X}]$. 

For the second term on the RHS of (\ref{appeq_train_rp_converge}), we have
\begin{eqnarray}
    (\mathbf I - \eta\mathbf B\mathbf B^\top)^{t+1} \mathbf P_{N(\mathbf B^\top)} \mathbf Y &=&  (\mathbf I - \eta\mathbf B\mathbf B^\top)^{t} (\mathbf P_{ N(\mathbf B^\top)} - \eta \mathbf B  (\mathbf P_{N(\mathbf B^\top)} \mathbf B)^\top)  \mathbf Y \nonumber \\
    &=&  (\mathbf I - \eta\mathbf B\mathbf B^\top)^{t} \mathbf P_{N(\mathbf B^\top)}\mathbf Y \nonumber\\
    &=&  \mathbf P_{N(\mathbf B^\top)}\mathbf Y
\end{eqnarray}

It follows that
\begin{equation}
\mathbf R_t \rightarrow \mathbf P_{N(\mathbf B^\top)}\mathbf Y = \left(\mathbf I_{n_l} - \mathbf B (\mathbf B^\top \mathbf B)^{-1} \mathbf B^\top \right) \mathbf Y,
\end{equation}
which is equivalent to optimal training residuals of linear regression~\citep{boyd2004convex}.

However, in our case, the testing residuals will not necessarily converge for arbitrary $\mathbf S_{\mathbf X'\mathbf X}$ since the geometric series in (\ref{appeq_test_residual}) diverges outside $(-1,1)$. Nevertheless, we can further stipulate $\mathbf S$ is also positive semi-definite. In this case, we have $\mathbf S_{\mathbf X' \mathbf X} = \mathbf B' \mathbf B^\top$ for another tall matrix $\mathbf B' \in \mathbb R^{n_u \times r}$. Correspondingly, the testing residuals can be written as
\begin{eqnarray}
    \mathbf R'_{t+1} &=& - \eta \mathbf B' \mathbf B^\top \sum_{i=0}^{t}  \left(\mathbf I_{n_l} -\eta \mathbf B \mathbf B^\top \right)^{i} \mathbf Y \nonumber\\
    &=& - \eta \mathbf B' \sum_{i=0}^{t}  \left(\mathbf I_r -\eta \mathbf B^\top \mathbf B \right)^{i} \mathbf B^\top \mathbf Y \nonumber\\
    &=& - \eta \mathbf B' \sum_{i=0}^{t}  \left(\mathbf I_r -\eta \mathbf B^\top \mathbf B \right)^{i} \mathbf B^\top \mathbf Y \nonumber\\
    &\rightarrow& -\eta\mathbf B' (\mathbf I_r - (\mathbf I_r -\eta\mathbf B^\top \mathbf B))^{-1} \mathbf B^\top \mathbf Y \nonumber\\
    &=& -\mathbf B' (\mathbf B^\top \mathbf B)^{-1} \mathbf B^\top \mathbf Y.
\end{eqnarray}

For the last variant, if $\mathbf S_{\mathbf X \mathbf X}$ is not positive semi-definite, it is no longer possible to guarantee convergence from arbitrary initializations. Rather we can only establish that solutions of the form described above can serve as fixed points of the iterations.

\begin{table}[t!]
\centering
\caption{Summary of the convergence of residual propagation. The convergence of model predictions can be inferred by $\mathbf F_t = \mathbf Y - \mathbf R_t$ and $\mathbf F'_t = - \mathbf R'_t$.} \label{tab_converge}
\resizebox{1.0\textwidth}{!}{
\setlength{\tabcolsep}{5mm}{
\begin{tabular}{@{}c|c|c|c|c@{}}
\toprule
$\mathbf S_{\mathbf X\mathbf X}$ & $\mathbf S$ & Convergence of $\mathbf R_t$ & Convergence of $\mathbf R'_t$ & Counterpart \\ \midrule 
 PD &  PSD  & $\mathbf 0$ & $-\mathbf S_{\mathbf X' \mathbf X} \mathbf S_{\mathbf X \mathbf X}^{-1} \mathbf Y$ & Kernel Regression  \\ 
 PD & not PSD & $\mathbf 0$ & $-\mathbf S_{\mathbf X' \mathbf X} \mathbf S_{\mathbf X \mathbf X}^{-1} \mathbf Y$ & Unique   \\ 
 PSD (but not PD) & PSD &  $\left(\mathbf I_{n_l} - \mathbf B (\mathbf B^\top \mathbf B)^{-1} \mathbf B^\top \right) \mathbf Y$ & $-\mathbf B' (\mathbf B^\top \mathbf B)^{-1} \mathbf B^\top \mathbf Y$ & Linear Regression  \\
 PSD (but not PD) & not PSD &  $\left(\mathbf I_{n_l} - \mathbf B (\mathbf B^\top \mathbf B)^{-1} \mathbf B^\top \right) \mathbf Y$ & Not Converge & Unique  \\
 not PSD & not PSD&  Not Converge  &  Not Converge & Unique \\\bottomrule
\end{tabular}}}
\end{table}

To summarize, for the RP algorithm in (\ref{eqn_rp}) where $\mathbf S = \mathbf A^K$, if step size is sufficiently small $\eta < 2 \sigma^{-1}_{max}[\mathbf A^K_{\mathbf X\mathbf X}]$, both $\mathbf R_t$ and $\mathbf R'_t$ converge as $t\rightarrow \infty$ for positive definite $\mathbf A^K_{\mathbf X\mathbf X}$ or positive semi-definite $\mathbf A^K$. In practice, one can choose $K$ as an even number or letting $\mathbf A \leftarrow \alpha \mathbf A + (1-\alpha) \mathbf I_n$ for $\alpha>=\frac{1}{2}$ to enforce positive semi-definiteness of $\mathbf A^K$. 
However, in practice, we did not find that test performance was compromised when the algorithm does not converge. Table~\ref{tab_converge} gives a clear overview of the convergence of RP and connection with existing learning algorithms.

\subsection{Computation of Node-Level GNTK} \label{app_gntk}

We present the recurrent formula for computing of GNTK in node-level tasks. 
Suppose the GNN is denoted as $f(\boldsymbol x_i; \mathbf A)\in \mathbb R$ where the input is node feature $\boldsymbol x_i$, the graph structure $\mathbf A$ is used for cross-instance feature propagation at each layer, weights are $\mathbf W$. The GNTK in node-level tasks is defined as
\begin{equation}
\mathbf \Theta_t(\boldsymbol x_i, \boldsymbol x_j; \mathbf A) = \left\langle\frac{\partial f_t(\boldsymbol x_i; \mathbf A)}{\partial \mathbf W_t}, \frac{\partial f_t\left(\boldsymbol x_j; \mathbf A\right)}{\partial \mathbf W_t}\right\rangle,
\end{equation}
for a pair of nodes (i.e. data points) at optimization time index $t$. Intuitively, the kernel function measures quantifies similarity between seen and unseen instances based on how differently their outputs given by the GNN change by an infinitesimal perturbation of weights (which we will show is biased by the adjacency matrix used for feature propagation in GNNs).

Note that we consider transductive learning here, which is more convenient. For inductive learning, one should replace $\bar{\mathbf X}$ with $\mathbf X$, and $\mathbf A$ with its submatrix $\mathbf A_{\mathbf X\mathbf X}$. Let us denote GNTK with/without feature propagation at $\ell$-th layer as ${\mathbf \Theta}^{(\ell)}$/$\bar{\mathbf \Theta}^{(\ell)}$, covariance matrix of outputs of $\ell$-th layer with/without feature propagation as ${\mathbf \Sigma}^{(\ell)} / \bar{\mathbf \Sigma}^{(\ell)}$, covariance matrix of the derivative to $\ell$-th layer layer as $\dot{{\mathbf \Sigma}}^{(\ell)}$. The recurrent formula for computing node-level GNTK for infinitely-wide GNNs can be written as the following. The initialization of GNTK for graph regression is given as
\begin{equation}
\begin{split}
        \bar{\mathbf{\Theta}}^{(1)}(\boldsymbol x_i, \boldsymbol x_j; \mathbf A)& =\bar{\boldsymbol{\Sigma}}^{(1)}\left(\boldsymbol x_i, \boldsymbol x_j; \mathbf A\right) = \boldsymbol x_i^\top \boldsymbol x_{j},
\end{split}
\end{equation}

We also write matrix form of computing GNTK here in order to give clearer intuitions of how the adjacency matrix can be naturally encoded into the computation of $\mathbf \Theta$ in node regression tasks:
\begin{equation}
\begin{split}
        \bar{\mathbf{\Theta}}^{(1)}(\bar{\mathbf X}, \bar{\mathbf X}; \mathbf A)& =\bar{\boldsymbol{\Sigma}}^{(1)}\left(\bar{\mathbf X}, \bar{\mathbf X}; \mathbf A\right) = \bar{\mathbf X}^\top \bar{\mathbf X}.
\end{split}
\end{equation}

\paragraph{Propagation.}  The feature propagation operation (i.e. $\mathbf Z\gets \mathbf A \mathbf Z$) at each layer corresponds to
\begin{equation}
\begin{split}
    \boldsymbol{\Sigma}^{(\ell-1)}\left(\boldsymbol x_i, \boldsymbol x_j; \mathbf A\right)&= \sum_{i' \in \mathcal{N}_i} \sum_{j^{\prime} \in \mathcal{N}_{j}} \mathbf A_{ii'} \mathbf A_{jj'} \bar{\boldsymbol{\Sigma}}^{(\ell-1)}\left(\boldsymbol x_{i'}, \boldsymbol x_{j'}; \mathbf A\right)\\
    \mathbf{\Theta}^{(\ell-1)}\left(\boldsymbol x_i, \boldsymbol x_j; \mathbf A\right)&= \sum_{i' \in \mathcal{N}_i } \sum_{j^{\prime} \in \mathcal{N}_{j}} \mathbf A_{ii'} \mathbf A_{jj'} \bar{\mathbf{\Theta}}^{(\ell-1)}\left(\boldsymbol x_{i'}, \boldsymbol x_{j'}; \mathbf A\right),
\end{split}
\end{equation}

where $\mathcal N_i$ denote neighboring nodes of $\boldsymbol x_i$ including $\boldsymbol x_i$ itself. For GCN where $\mathbf A$ is defined as a symmetric normalized adjacency matrix, we have $\mathbf A_{ij} = \mathbf A_{ji} = 1/{\sqrt{d_i  d_{j}}}$. The above equation can be compactly described in a matrix form:
\begin{equation}
\begin{split}
    \boldsymbol{\Sigma}^{(\ell-1)}\left(\bar{\mathbf X}, \bar{\mathbf X}; \mathbf A\right)&=  \mathbf A~ \bar{\boldsymbol{\Sigma}}^{(\ell-1)}\left(\bar{\mathbf X}, \bar{\mathbf X}; \mathbf A\right) \mathbf A\\
    \mathbf{\Theta}^{(\ell-1)}\left(\bar{\mathbf X}, \bar{\mathbf X}; \mathbf A\right)&=  \mathbf A~\bar{\mathbf{\Theta}}^{(\ell-1)}\left(\bar{\mathbf X}, \bar{\mathbf X}; \mathbf A\right) \mathbf A.
\end{split}
\end{equation}

The multiplication of $\mathbf A$ before $\bar{\boldsymbol{\Theta}}^{(\ell-1)}\left(\bar{\mathbf X}, \bar{\mathbf X}; \mathbf A\right)$ gives row-wise weighted summation and after $\bar{\boldsymbol{\Theta}}^{(\ell-1)}\left(\bar{\mathbf X}, \bar{\mathbf X}; \mathbf A\right)$ gives column-wise weighted summation. 


\paragraph{Transformation.} Let $\mathcal{T}$ and $\dot{\mathcal{T}}$ be functions from $2 \times 2$ positive semi-definite matrices $\mathbf \Lambda$ to $\mathbb{R}$ given by
\begin{equation}
\left\{\begin{array}{l}\mathcal{T}(\mathbf \Lambda)=\mathbb{E}[\sigma(a) \sigma(b)] \\ \dot{\mathcal{T}}(\mathbf \Lambda)=\mathbb{E}\left[\sigma^{\prime}(a) \sigma^{\prime}(b)\right]\end{array} \quad(a, b) \sim \mathcal{N}(0, \mathbf \Lambda)\right.
\end{equation}

where $\sigma'$ is the derivative of the activation function. Then, the feature transformation process (i.e. $\mathbf Z = \sigma(\mathbf Z \mathbf W)$) then corresponds to:
\begin{equation}
\begin{gathered}
{\bar{\boldsymbol{\Sigma}}^{(\ell)}\left(\boldsymbol x_i, \boldsymbol x_j;\mathbf A\right)=c_\sigma \mathcal{T}\left(\begin{array}{ll}
{\boldsymbol{\Sigma}^{(\ell-1)}(\boldsymbol x_i, \boldsymbol x_i;\mathbf A)} & {\boldsymbol{\Sigma}^{(\ell-1)}\left(\boldsymbol x_i, \boldsymbol x_j;\mathbf A\right)} \\
{\boldsymbol{\Sigma}^{(\ell-1)}\left(\boldsymbol x_j, \boldsymbol x_i;\mathbf A\right)} & {\boldsymbol{\Sigma}^{(\ell-1)}\left(\boldsymbol x_j, \boldsymbol x_j;\mathbf A\right)}
\end{array}\right),} \\
{\dot{\boldsymbol{\Sigma}}^{(\ell)}\left(\boldsymbol x_i, \boldsymbol x_j;\mathbf A\right)=c_\sigma \dot{\mathcal{T}}\left(\begin{array}{ll}
{\boldsymbol{\Sigma}^{(\ell-1)}(\boldsymbol x_i, \boldsymbol x_i;\mathbf A)} & {\boldsymbol{\Sigma}^{(\ell-1)}\left(\boldsymbol x_i, \boldsymbol x_j;\mathbf A\right)} \\
{\boldsymbol{\Sigma}^{(\ell-1)}\left(\boldsymbol x_j, \boldsymbol x_i;\mathbf A\right)} & {\boldsymbol{\Sigma}^{(\ell-1)}\left(\boldsymbol x_j, \boldsymbol x_j;\mathbf A\right)}
\end{array}\right).}
\end{gathered}
\end{equation}

The layer-wise computation for $\mathbf \Theta$ is given as
\begin{equation}
\begin{split}
\bar{\mathbf{\Theta}}^{(\ell)}\left(\bar{\mathbf X}, \bar{\mathbf X};\mathbf A\right)&=\mathbf{\Theta}^{(\ell-1)}\left(\bar{\mathbf X}, \bar{\mathbf X};\mathbf A\right)\odot\dot{\boldsymbol{\Sigma}}^{(\ell)}\left(\bar{\mathbf X}, \bar{\mathbf X};\mathbf A\right)+\bar{\mathbf{\Sigma}}^{(\ell)}\left(\bar{\mathbf X}, \bar{\mathbf X};\mathbf A\right),
\end{split}
\end{equation}

where $\odot$ denotes Hadamard product.
For a $\ell$-layer GNN, the corresponding GNTK is given by $\mathbf \Theta^{(\ell)}(\bar{\mathbf X}, \bar{\mathbf X};\mathbf A)$, which is abbreviated as $\mathbf \Theta^{(\ell)}$ in the main text.

\subsection{Proof of Theorem~\ref{thm_twolayergnn}: Two-Layer GNN} \label{sec_twolayergnn}
Consider a two-layer GNN defined as 
\begin{equation}
\{f(\boldsymbol x;\mathbf A)\}_{\boldsymbol x\in \bar{{\mathbf X}}} = [\mathbf F, \mathbf F']=\mathbf{A} \frac{1}{\sqrt{m}} \sigma  \left(\mathbf{A} \bar{\mathbf X} \mathbf{W}^{(1)}\right) \mathbf{W}^{(2)}
\end{equation}
where $m$ is the width and $\sigma$ is ReLU activation. Following previous works that analyze two-layer fully-connected neural networks~\citep{arora2019fine,du2019gradient}, we consider optimization of the first layer weights $\mathbf{W}^{(1)}$ while fixing the second-layer weights. In this case, the two-layer GNTK is defined as
\begin{equation} 
\mathbf \Theta^{(2)}(\boldsymbol x_i, \boldsymbol x_j; \mathbf A) = \left\langle \frac{\partial f(\boldsymbol x_i ; \mathbf{A}, \mathbf{W})}{\partial \mathbf W^{(1)}}, \frac{\partial f(\boldsymbol x_j ; \mathbf{A}, \mathbf{W})}{\partial \mathbf W^{(1)}}\right\rangle.
\end{equation}
We can derive the the explicit form formula for computing it in overparameterized regime based on the general formula for arbitrarily deep GNNs given in Appendix.~\ref{app_gntk} (or directly calculating ${\partial f(\boldsymbol x_i ; \mathbf{A}, \mathbf{W})}/{\partial \mathbf W^{(1)}}$ by using chain rule). Consequently, we have
\begin{equation}
\begin{split}
\mathbf \Theta^{(2)}(\boldsymbol x_i, \boldsymbol x_j; \mathbf A) = &\sum_{{i'} \in \mathcal{N}(i)} \sum_{{j'} \in \mathcal{N}(j)} \mathbf A_{ii'} \mathbf A_{jj'} \left([\mathbf A\bar{\mathbf X}]_{i'}^{\top} [\mathbf A\bar{\mathbf X}]_{j'}\right)\\
&\mathbb{E}_{\boldsymbol{w} \sim \mathcal{N}(0, 1)}\left[\mathds{1}\left\{\boldsymbol{w}^{\top} [\mathbf A\bar{\mathbf X}]_{i'} \geq 0, \boldsymbol{w}^{\top} [\mathbf A\bar{\mathbf X}]_{j'} \geq 0\right\}\right]
\end{split}
\end{equation}
where $\mathcal N(i)$ denote the set of neighboring nodes, $\boldsymbol w \in \mathbb R^d$ is sampled from Gaussian distribution, and $\mathds{1}$ is indicator function. 

\paragraph{Case 1.} If we stipulate the input features are represented by an identity matrix, i.e. $\bar{\mathbf X} = \mathbf I_n$, it follows that
\begin{equation}
\begin{split}
\mathbf \Theta^{(2)}(\boldsymbol x_i, \boldsymbol x_j; \mathbf A) &= \sum_{{i'} \in \mathcal{N}(i)} \sum_{{j'} \in \mathcal{N}(j)} \mathbf A_{ii'} \mathbf A_{jj'} \left(\mathbf A_{i'}^{\top} \mathbf A_{j'}\right)\mathbb{E}_{\boldsymbol{w} \sim \mathcal{N}(0, 1)}\left[\mathds{1}\left\{\boldsymbol{w}^{\top} \mathbf A_{i'} \geq 0, \boldsymbol{w}^{\top} \mathbf A_{j'} \geq 0\right\}\right]\\
&= \sum_{{i'} \in \mathcal{N}(i)} \sum_{{j'} \in \mathcal{N}(j)} \mathbf A_{ii'} \mathbf A_{jj'} \frac{\mathbf A_{i'}^{\top} \mathbf A_{j'}(\pi - \operatorname{arccos}(\frac{\mathbf A_{i'}^\top \mathbf A_{j'}}{\|\mathbf A_{i'}\| \|\mathbf A_{j'}\|}))}{2\pi}.
\end{split}
\end{equation}
The above equation can be written neatly in the matrix form
\begin{equation}
\mathbf \Theta^{(2)}(\bar{\mathbf X}, \bar{\mathbf X}; \mathbf A) = \mathbf A (\mathbf A^2 \odot {\mathbf S})\mathbf A, \text{\quad where\quad}{\mathbf S}_{ij} = {(\pi - \operatorname{arccos}(\frac{\mathbf A_i^\top \mathbf A_j}{\|\mathbf A_i\| \|\mathbf A_j\|}))}/{2\pi}.
\end{equation}
The matrix $\mathbf S$ reweights each entry in $\mathbf A^2$ by the similarity of neighborhood patterns of $\boldsymbol x_i$ and $\boldsymbol x_j$.
Substituting it to (\ref{eqn_inference}) gives us the training dynamics of overparameterized two-layer GNN
\begin{equation}
\begin{split}
\left[\mathbf R_{t+1}, \mathbf R'_{t+1}\right] = -\eta \mathbf A (\mathbf A^2 \odot {\mathbf S})\mathbf A  [\mathbf R_t, \mathbf 0]  ~+~ \left[\mathbf R_t, \mathbf R'_{t}\right].
\end{split}
\end{equation}

\paragraph{Case 2.} If we stipulate the input features are node embeddings from spectral decomposition of a full-rank adjacency matrix, i.e.  $\bar{\mathbf X} \triangleq \mathop{\arg\min}_{\mathbf B} \|\mathbf A - \mathbf B \mathbf B^\top\|_F^2$. We have 
\begin{equation}
\begin{split}
&~~\mathbf \Theta^{(2)}(\boldsymbol x_i, \boldsymbol x_j; \mathbf A) \\
&= \sum_{{i'} \in \mathcal{N}(i)} \sum_{{j'} \in \mathcal{N}(j)} \mathbf A_{ii'} \mathbf A_{jj'} \mathbf A^3_{i'j'} \mathbb{E}_{\boldsymbol{w} \sim \mathcal{N}(0, 1)}\left[\mathds{1}\left\{\boldsymbol{w}^{\top} [\mathbf A\bar{\mathbf X}]_{i'} \geq 0, \boldsymbol{w}^{\top} [\mathbf A\bar{\mathbf X}]_{j'} \geq 0\right\}\right]\\
&= \sum_{{i'} \in \mathcal{N}(i)} \sum_{{j'} \in \mathcal{N}(j)} \mathbf A_{ii'} \mathbf A_{jj'} \frac{\mathbf A^3_{i'j'}(\pi - \operatorname{arccos}(\frac{[\mathbf A\bar{\mathbf X}]_{i'}^\top[\mathbf A\bar{\mathbf X}]_{j'}}{\|[\mathbf A\bar{\mathbf X}]_{i'}\| \|[\mathbf A\bar{\mathbf X}]_{j'}\|}))}{2\pi}.
\end{split}
\end{equation}
The above equation can be written neatly in the matrix form
\begin{equation}
\mathbf \Theta^{(2)}(\bar{\mathbf X}, \bar{\mathbf X}; \mathbf A) = \mathbf A (\mathbf A^3 \odot \tilde{\mathbf S})\mathbf A, \text{\quad where\quad}\tilde{\mathbf S}_{ij} = {(\pi - \operatorname{arccos}(\frac{[\mathbf A\bar{\mathbf X}]_{i}^\top[\mathbf A\bar{\mathbf X}]_{j}}{\|[\mathbf A\bar{\mathbf X}]_{i}\| \|[\mathbf A\bar{\mathbf X}]_{j}\|}))}/{2\pi}.
\end{equation}
The matrix $\tilde{\mathbf S}$ reweights each entry in $\mathbf A^3$ by the similarity of aggregated node embeddings $[\mathbf A\bar{\mathbf X}]_{i}$ and $[\mathbf A\bar{\mathbf X}]_{j}$. Substituting it to (\ref{eqn_inference}) gives us the training dynamics of overparameterized two-layer GNN
\begin{equation}
\begin{split}
\left[\mathbf R_{t+1}, \mathbf R'_{t+1}\right] =  -\eta \mathbf A (\mathbf A^3 \odot \tilde{\mathbf S})\mathbf A  [\mathbf R_t, \mathbf 0]  ~+~ \left[\mathbf R_t, \mathbf R'_{t}\right].
\end{split}
\end{equation}

\subsection{Proof: Arbitrarily Deep Decoupled GNN} \label{app_decoupled}

Before analyzing the training dynamics of deep GNNs, let us first consider an arbitrarily deep standard fully-connected neural networks, which is denoted as $\operatorname{MLP}(\bar{\mathbf X})$ with weights $\mathbf W$. We also denote its corresponding NTK in overparameterized regime as $\mathbf \Theta^{(\ell)}(\bar{\mathbf X}, \bar{\mathbf X})$. For input features as an identity matrix $\bar{\mathbf X} = \mathbf I_n$, we have the following permutation invariant property of $\mathbf \Theta^{(\ell)}(\bar{\mathbf X}, \bar{\mathbf X})$.

\begin{lemma}[Permutation Invariance of NTK]
For arbitrarily deep and infinitely wide fully-connected neural networks $\operatorname{MLP}(\bar{\mathbf X})$ with ReLU activation and standard NTK parameterization, the corresponding NTK $\mathbf \Theta^{(\ell)}(\bar{\mathbf X}, \bar{\mathbf X})$ for onehot vector inputs is permutation invariant. Namely for arbitrary permutation function $\psi$ on input set $\bar{\mathbf X}$
\begin{equation}
\mathbf \Theta^{(\ell)}(\psi(\bar{\mathbf X}), \psi(\bar{\mathbf X})) = \mathbf \Theta^{(\ell)}(\bar{\mathbf X}, \bar{\mathbf X}).
\end{equation}
\end{lemma}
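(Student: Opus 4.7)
The plan is to show that, for one-hot inputs, the NTK matrix has only two distinct entry values (one on the diagonal, one off), and then to observe that any such matrix is conjugation-invariant under permutations. First, I would recall the standard recursive formula for the NTK of a fully-connected ReLU network with NTK parameterization (as in Jacot et al., 2018), which expresses $\mathbf{\Theta}^{(\ell)}(\boldsymbol{x}_i, \boldsymbol{x}_j)$ in terms of the layerwise covariances $\bar{\boldsymbol{\Sigma}}^{(\ell')}(\boldsymbol{x}_i, \boldsymbol{x}_j)$ and derivative covariances $\dot{\boldsymbol{\Sigma}}^{(\ell')}(\boldsymbol{x}_i, \boldsymbol{x}_j)$. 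The key structural fact, provable by an easy induction on $\ell'$, is that each of these quantities depends on the inputs only through the three inner products $\boldsymbol{x}_i^\top \boldsymbol{x}_i$, $\boldsymbol{x}_j^\top \boldsymbol{x}_j$, and $\boldsymbol{x}_i^\top \boldsymbol{x}_j$, since the ReLU arc-cosine kernel $\mathcal{T}$ and $\dot{\mathcal{T}}$ only use the $2\times 2$ covariance of $(a,b)$.

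Next, I would specialize to $\bar{\mathbf{X}} = \mathbf{I}_n$. Every self-inner-product equals $1$, while the cross inner product $\boldsymbol{x}_i^\top \boldsymbol{x}_j$ equals $1$ if $i=j$ and $0$ otherwise. Hence $\mathbf{\Theta}^{(\ell)}(\boldsymbol{x}_i,\boldsymbol{x}_j)$ takes a single value $\alpha_\ell$ whenever $i=j$ and another single value $\beta_\ell$ whenever $i\neq j$. Equivalently,
\begin{equation}
\mathbf{\Theta}^{(\ell)}(\mathbf{I}_n,\mathbf{I}_n) \;=\; (\alpha_\ell - \beta_\ell)\,\mathbf{I}_n \;+\; \beta_\ell\,\mathbf{1}\mathbf{1}^\top.
\end{equation}

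Finally, for any permutation $\psi$ with associated permutation matrix $\mathbf{P}$, we have $\psi(\bar{\mathbf{X}}) = \mathbf{P}\mathbf{I}_n = \mathbf{P}$, so $\mathbf{\Theta}^{(\ell)}(\psi(\bar{\mathbf{X}}),\psi(\bar{\mathbf{X}})) = \mathbf{P}\,\mathbf{\Theta}^{(\ell)}(\mathbf{I}_n,\mathbf{I}_n)\,\mathbf{P}^\top$. Since $\mathbf{P}\mathbf{I}_n\mathbf{P}^\top = \mathbf{I}_n$ and $\mathbf{P}\mathbf{1}\mathbf{1}^\top\mathbf{P}^\top = \mathbf{1}\mathbf{1}^\top$, both summands are preserved, yielding the claimed equality.

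The main obstacle, which is really a bookkeeping issue rather than a conceptual one, is the inductive verification that $\bar{\boldsymbol{\Sigma}}^{(\ell')}$, $\dot{\boldsymbol{\Sigma}}^{(\ell')}$, and $\bar{\mathbf{\Theta}}^{(\ell')}$ depend on inputs only through the three pairwise inner products; the Hadamard product $\mathbf{\Theta}^{(\ell'-1)}\odot \dot{\boldsymbol{\Sigma}}^{(\ell')}$ plus $\bar{\boldsymbol{\Sigma}}^{(\ell')}$ step must be carried through, and one must also check that the base case $\bar{\boldsymbol{\Sigma}}^{(1)}(\boldsymbol{x}_i,\boldsymbol{x}_j)=\boldsymbol{x}_i^\top\boldsymbol{x}_j$ indeed suffices to seed the induction. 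Once that dependence is established the rest is immediate.
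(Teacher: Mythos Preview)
Your proposal is correct and uses essentially the same inductive mechanism as the paper: the paper shows that the base Gram matrix satisfies $\mathbf{\Theta}^{(1)}(\psi(\mathbf{I}_n),\psi(\mathbf{I}_n))=\mathbf{P}\mathbf{P}^\top=\mathbf{I}_n$ and then inducts on matrix equality through the NTK recursion, whereas you induct on the equivalent entrywise statement that each kernel value depends only on the three pairwise inner products. Your framing has the minor bonus of directly yielding the form $(\alpha_\ell-\beta_\ell)\mathbf{I}_n+\beta_\ell\mathbf{1}\mathbf{1}^\top$, which the paper invokes separately (as a consequence of permutation invariance) in the proof of the decoupled-GNN theorem.
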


\begin{proof}
Recall that in the explicit form computation for NTK of fully-connected neural networks~\citep{jacot2018neural,lee2019wide}, the NTK and NNGP for a $(\ell+1)$-layer network is produced from the NTK and NNGP for a $\ell$-layer network. Therefore, to prove the permutation invariance of NTK of an arbitrarily deep fully-connected neural network, we need only to prove the permutation invariance of $\mathbf \Theta^{(1)}$ and $\mathbf \Sigma^{(1)}$, which are defined as the inner product of input features
\begin{equation}
\mathbf \Theta^{(1)}(\bar{\mathbf X}, \bar{\mathbf X}) = \mathbf \Sigma^{(1)}(\bar{\mathbf X}, \bar{\mathbf X}) = \bar{\mathbf X}\bar{\mathbf X}^\top.
\end{equation}
Since arbitrary permutation function $\psi$ on $\bar{\mathbf X}$ can be expressed as
\begin{equation}
\psi(\bar{\mathbf X}) = \prod_{i=0}^c \mathbf T_{i} \bar{\mathbf X}
\end{equation}
for some row-interchanging elementary matrices $\{\mathbf T_i\}_{i=0}^c$. It follows that
\begin{eqnarray}
\mathbf \Theta^{(1)}(\psi(\bar{\mathbf X}), \psi(\bar{\mathbf X})) &=& \prod_{i=0}^c \mathbf T_{i} \bar{\mathbf X} (\prod_{i=0}^c \mathbf T_{i} \bar{\mathbf X})^\top\nonumber\\
&=& \prod_{i=0}^c \mathbf T_{i} \bar{\mathbf X} \bar{\mathbf X}^\top \prod_{i=0}^c \mathbf T_{c-i}^\top\nonumber\\
&=& \mathbf T_0 \cdots \mathbf T_c \mathbf T_c^\top \cdots \mathbf T_0^\top\nonumber\\
&=& \mathbf I_n \nonumber\\
&=& \mathbf \Theta^{(1)}(\bar{\mathbf X}, \bar{\mathbf X}).
\end{eqnarray}
By induction, it follows that $\mathbf \Theta^{(\ell)}(\bar{\mathbf X},\bar{\mathbf X})$ is also permutation invariant.
\end{proof}

Now, let us consider a deep GNN defined as 
\begin{equation}
[\mathbf F, \mathbf F']=\mathbf{A}^{\ell} \operatorname{MLP}(\bar{\mathbf X})
\end{equation}
where $\operatorname{MLP}(\bar{\mathbf X})$ is a $L$-layer infinitely-wide fully-connected neural network with ReLU activation. Then, the corresponding node-level GNTK matrix is defined as
\begin{equation}
\left\langle \frac{\partial \mathbf{A}^{\ell} \operatorname{MLP}(\bar{\mathbf X})}{\partial \mathbf W}, \frac{\partial \mathbf{A}^{\ell} \operatorname{MLP}(\bar{\mathbf X})}{\partial \mathbf W}\right\rangle = \mathbf{A}^{\ell} \mathbf \Theta^{(L)}(\bar{\mathbf X},\bar{\mathbf X})\mathbf{A}^{\ell}.
\end{equation}
Since the NTK matrix $\mathbf \Theta^{(L)}(\bar{\mathbf X},\bar{\mathbf X})$ is permutation invariant with diagonal values being larger than non-diagonal values, we can write it as
\begin{equation}
\mathbf \Theta^{(L)}(\bar{\mathbf X},\bar{\mathbf X}) = c'(\mathbf{I}+c \mathbf{1 1 ^ { \top }})
\end{equation}
for some constants $c'$ and $c$ determined by the depth of the network $L$. Substituting it to (\ref{eqn_inference}) gives us the training dynamics of infinitely-wide and arbitrarily-deep GNN
\begin{equation}
\left[\mathbf R_{t+1}, \mathbf R'_{t+1}\right] = -\eta \mathbf A^{\ell} (\mathbf I + c\mathbf 1 \mathbf 1^\top) \mathbf A^{\ell} [\mathbf R_t, \mathbf 0]  ~+~ \left[\mathbf R_t, \mathbf R'_{t}\right].
\end{equation}

\subsection{Proof of Corollary~\ref{thm_lineargnn}: Linear GNN} \label{app_lineargnn}
For linear GNN defined as $[\mathbf F, \mathbf F']=\mathbf{A}^{\ell} \bar{\mathbf X} \mathbf{W}$. The corresponding node-level GNTK $\mathbf{\Theta}_t^{(1)}$ for linear GNN is naturally constant and can be computed as 
\begin{equation}
\begin{split}
\mathbf{\Theta}_t^{(1)}\left(\bar{\mathbf X}, \bar{\mathbf X}; \mathbf A\right) &=\nabla_{\mathbf{W}} [\mathbf F, \mathbf F']^{\top} \nabla_{\mathbf{W}} [\mathbf F, \mathbf F']\\
&= \mathbf A^\ell \bar{\mathbf X} (\mathbf A^\ell \bar{\mathbf X})^\top.
\end{split}
\end{equation}
\paragraph{Case 1.} When the input $\bar{\mathbf X}$ is defined as an identity matrix, we have 
\begin{equation}
\mathbf A^\ell \bar{\mathbf X} (\mathbf A^\ell \bar{\mathbf X})^\top = \mathbf A^{2\ell}
\end{equation}
which is also a special case of Theorem~\ref{thm_decoupledgnn} where the backbone MLP model is one layer and consequently $c = 0$.
Based on (\ref{eqn_inference}), the training dynamics of linear GNN can be written as
\begin{equation}
\left[\mathbf R_{t+1}, \mathbf R'_{t+1}\right] =  -\eta \mathbf A^{2\ell} [\mathbf R_t, \mathbf 0]  + \left[\mathbf R_t, \mathbf R'_{t}\right]
\end{equation}
which is equivalent to the basic version of RP in (\ref{eqn_rp}) with $K=2\ell$.

\paragraph{Case 2.} When the input $\bar{\mathbf X}$ is obtained from (full-rank) graph spectral decomposition, we have 
\begin{equation}
\mathbf A^\ell \bar{\mathbf X} (\mathbf A^\ell \bar{\mathbf X})^\top = \mathbf A^{\ell} \bar{\mathbf X} \bar{\mathbf X}^\top \mathbf A^{\ell} = \mathbf A^{2\ell+1}.
\end{equation}
In this case, the training dynamics of linear GNN can be written as
\begin{equation}
\left[\mathbf R_{t+1}, \mathbf R'_{t+1}\right] =  -\eta \mathbf A^{2\ell+1}  [\mathbf R_t, \mathbf 0]  + \left[\mathbf R_t, \mathbf R'_{t}\right]
\end{equation}
which is equivalent to the basic version of RP in (\ref{eqn_rp}) with $K=2\ell+1$.
For non-PSD adjacency matrix $\mathbf A$, we also have the approximation $\mathbf A^\ell \bar{\mathbf X}^\top \bar{\mathbf X} \mathbf A^\ell \approx \mathbf A^{2\ell+1}$. However, in this case, the basic RP algorithm in (\ref{eqn_rp}) can not be implemented by the conventional (deep) learning framework (cf. Theorem~\ref{thm_converge}).

\subsection{Derivation of \eqref{eq_genbound}: Generalization Bound}\label{app_gen_bound} 

Based on the Rademacher complexity-based generalization bound for kernel regression in~\citep{bartlett2002rademacher}, for training data $\{(\boldsymbol x_i, y_i)\}_{i=1}^{n_l}$ drawn i.i.d. from an underlying distribution $\mathcal P$, arbitrary loss function $l:\mathbb R \times \mathbb R\rightarrow [0,1]$ that is 1-Lipschitz in the first argument such that $l(y, y) = 0$, we have that with probability at least $1-\delta$, the population risk of kernel regression with respect to the limit NTK $\mathbf \Theta$ has upper bound (see proof in \citet{bartlett2002rademacher,arora2019fine,du2019graph})
\begin{equation}
\mathbb{E}_{(\boldsymbol x,y) \sim \mathcal{P}}\left[l\left(f(\boldsymbol x), y\right)\right]=O\left(\frac{\sqrt{\mathbf Y^\top \mathbf \Theta^{-1} \mathbf Y \cdot\operatorname{Tr}(\mathbf \Theta)}}{n_l}+\sqrt{\frac{\log (1 / \delta)}{n_l}}\right).
\end{equation}
We now assume an extreme case where the limit NTK matrix is determined by the graph, i.e. $\mathbf \Theta = (\mathbf I - \alpha \mathbf A)^{-1} = \lim _{k \rightarrow \infty} \sum_{i=0}^{k}(\alpha \mathbf A)^i$, which is equivalent to the propagation matrix adopted by the converged LP algorithm,\footnote{With slight abuse of notation, we denote by $\mathbf A$ the adjacency matrix for the training set, and $\mathbf \Theta^*$ the optimal
kernel matrix for the training set.} and is a valid kernel matrix by noting that the spectral radius $\rho(\mathbf A)\leq 1$ for normalized adjacency matrix $\mathbf A$ and $\alpha < 1$. For $\mathbf Y^\top \mathbf \Theta^{-1} \mathbf Y$, we have
\begin{eqnarray}
\mathbf Y^\top \mathbf \Theta^{-1} \mathbf Y &=& \mathbf Y^\top \mathbf Y - \alpha \mathbf Y^\top \mathbf A \mathbf Y \nonumber\\
&=& n_l - \alpha \langle\mathbf Y\mathbf Y^\top, \mathbf A\rangle_F \nonumber\\
&=& n_l - c A(\mathbf \Theta^*, \mathbf A)
\end{eqnarray}
where $c = \alpha \|\mathbf \Theta^*\|_F \|\mathbf A\|_F$ is a constant. For $\operatorname{Tr}(\mathbf \Theta)$, we have
\begin{equation}
\operatorname{Tr}(\mathbf \Theta) = \sum_{i=1}^{n_l} \frac{1}{1-\alpha\sigma_i(\mathbf A)} \leq \frac{n_l}{1-\alpha\sigma_{max}(\mathbf A)} = O(n_l),
\end{equation}
where $\sigma_{max}(\mathbf A)\leq 1$ is the maximal eigenvalue of $\mathbf A$.
It follow that the population risk has upper bound
\begin{equation}
\mathbb{E}_{(\boldsymbol x,y) \sim \mathcal{P}}\left[l\left(f(\boldsymbol x), y\right)\right]=O\left(\sqrt{\frac{n_l - c A(\mathbf \Theta^*, \mathbf A)}{n_l}}+\sqrt{\frac{\log (1 / \delta)}{n_l}}\right).
\end{equation}


\subsection{Proof of Theorem~\ref{thm_gnnoptimal}: Bayesian Optimality of GNN}\label{app_gnnoptimal}
\paragraph{Setup.} In this proof, we consider the following setup. Suppose all possible samples are given by inputs $\mathbf X = \{\boldsymbol x_i\}_{i=1}^N$ and labels $\mathbf Y = \{y(\boldsymbol x_i)\}_{i=1}^N$ that are randomly generated from an unknown distribution $\mathcal P$, where $N$ could be arbitrarily large and both $\mathbf X$ and $\mathbf Y$ are assumed to be unobserved. For binary classification, we also have $y\in \{-1,1\}$ with balanced label distribution, i.e. $\mathbb{E}[y] = 0$. Given that for an arbitrary pair of instances $\boldsymbol x$ and $\boldsymbol x'$, the probability that they share the same label is defined by a large matrix $\mathbf A \in \mathbb R^{N\times N}$:
\begin{equation}\label{eqn_prob}
P\left(y(\boldsymbol x) = y(\boldsymbol x')\right) = \mathbf A_{\boldsymbol x\boldsymbol x'},
\end{equation}
where $\mathbf A_{\boldsymbol x\boldsymbol x'}$ is an element in $\mathbf A$ that corresponds to sample pair $\boldsymbol x$ and $\boldsymbol x'$. Our task is to find the optimal kernel function whose corresponding kernel regression predictive function $f_{ker}(\cdot)$ minimizes the population risk
\begin{equation}
\mathbb{E}_{(\boldsymbol x,y) \sim \mathcal{P}}\left[\left(f_{ker}(\boldsymbol x) - y(\boldsymbol x)\right)^2\right].
\end{equation}

\paragraph{Proof.} Next, we give solution to the optimal kernel function that minimizes the population risk in the above setting. For data randomly generated from $\mathcal P$ and conditioned on (\ref{eqn_prob}), the random labels have covariance matrix $\mathbf \Sigma(\mathbf X, \mathbf X) \in \mathbb R^{N\times N}$ which satisfies
\begin{equation}
\begin{split}
    \mathbf \Sigma(\boldsymbol x, \boldsymbol x') &= \mathbb{E}_{(\boldsymbol x,y) \sim \mathcal{P}}\left[(y(\boldsymbol x) - \mathbb{E}[y])(y(\boldsymbol x')-\mathbb{E}[y])\right]\\
    &= \mathbb{E}_{(\boldsymbol x,y) \sim \mathcal{P}}\left[y(\boldsymbol x) y(\boldsymbol x')\right] \\
    &= 2\mathbf A_{\boldsymbol x\boldsymbol x'} - 1.
\end{split}
\end{equation}
Let us consider kernel regression w.r.t. a kernel $\kappa$, whose corresponding predictive function $f_{ker}(\boldsymbol x)$ on all possible unobserved samples $\mathbf X$ and $\mathbf Y$ is given by
\begin{equation}
f_{ker}(\boldsymbol x) = \kappa(\boldsymbol x, \mathbf X) \mathbf K(\mathbf X, \mathbf X)^{-1} \mathbf Y = \mathbf M^\top_{\boldsymbol x} \mathbf Y.
\end{equation}
where $\mathbf M_{\boldsymbol x} \in \mathbb R^{N}$ is a vector associated with sample $\boldsymbol x$ defined as $\mathbf M_{\boldsymbol x}^\top = \kappa(\boldsymbol x, \mathbf X) \mathbf K(\mathbf X, \mathbf X)^{-1}$.
Note that we do not differentiate training or testing samples here since all inputs and labels are assumed to be unseen.
We aim to search the optimal kernel function that minimizes the population risk, which can be achieved if the risk for each sample $\boldsymbol x$ is minimized. Specifically, for an arbitrary sample $\boldsymbol x$, its risk is
\begin{eqnarray}
\mathbb{E}_{y}\left[\left(f_{ker}(\boldsymbol x) - y(\boldsymbol x)\right)^2\right] &=& \mathbb{E}_{y}\left[\left(\mathbf M^\top_{\boldsymbol x} \mathbf Y - y(\boldsymbol x)\right)^2\right]\nonumber\\
&=& \mathbf M^\top_{\boldsymbol x} \mathbf \Sigma(\mathbf X, \mathbf X) \mathbf M_{\boldsymbol x} -2 \mathbf M_{\boldsymbol x}^\top \mathbf \Sigma(\mathbf X, \boldsymbol x) + \mathbf \Sigma(\boldsymbol x, \boldsymbol x).
\end{eqnarray}
To find the optimal kernel that minimizes the risk, we differentiate it w.r.t. $\mathbf M_{\boldsymbol x}$, 
\begin{equation}
\nabla_{\mathbf M_{\boldsymbol x}} \mathbb{E}_{(\boldsymbol x,y) \sim \mathcal{P}}\left[\left(f_{ker}(\boldsymbol x) - y(\boldsymbol x)\right)^2\right] =2\mathbf \Sigma(\mathbf X, \mathbf X) \mathbf M_{\boldsymbol x} - 2 \mathbf \Sigma(\mathbf X, \boldsymbol x)
\end{equation}
which gives us the minimizer
\begin{equation}
\mathbf M_{\boldsymbol x}^\top = \mathbf \Sigma(\boldsymbol x, \mathbf X) \mathbf \Sigma(\mathbf X, \mathbf X)^{-1}.
\end{equation}
And since $\mathbf M_{\boldsymbol x}^\top = \kappa(\boldsymbol x, \mathbf X) \mathbf K(\mathbf X, \mathbf X)^{-1}$, the population risk is minimized on each single data point for optimal kernel $\kappa^*$ whose kernel matrix is
\begin{equation}
\mathbf K^*(\mathbf X, \mathbf X) = 2\mathbf A-1.
\end{equation}

\begin{remark}
The above analysis also generalize to the setting when $\mathbf A$ is defined over a subset of all possible samples, in which case the optimal kernel matrix is still $2\mathbf A-1$ but it is no longer possible to minimize population risk outside the coverage of $\mathbf A$.
\end{remark}

\section{Additional Discussions} \label{app_additional_discussion}

\subsection{Different Settings} \label{app_induc_trans}

Transductive (semi-supervised) and inductive (supervised) learning are two types of common settings in node-level classification tasks. The former incorporates unlabeled nodes (testing samples) in the training process while the latter only has access to unlabeled nodes for inference. Recall that the residual propagation process of general parameterized models with arbitrary unseen testing samples can be written as
\begin{equation}
\left[\mathbf R_{t+1}, {\mathbf R'_{t+1}}\right] = -\eta~ \left[\begin{array}{cc}
\underbrace{\mathbf \Theta_t(\mathbf X, \mathbf X)}_{{n_l\times n_l}} & \mathbf \Theta_t(\mathbf X, \mathbf X') \\
\underbrace{{\mathbf \Theta_t(\mathbf X', \mathbf X)}}_{{(n-n_l)\times n_l}} & \mathbf \Theta_t(\mathbf X', \mathbf X')
\end{array}\right] \cdot \underbrace{[{\mathbf R_t}, \mathbf 0]}_{{n}}  ~~+~~ \underbrace{\left[\mathbf R_t, {\mathbf R'_{t}}\right]}_{{n}}.
\end{equation}

For node-level tasks, the difference between transductive and inductive settings boils down to different feature map of the kernel function for training and testing sets. To be specific:

\paragraph{Transductive learning.} For transductive learning or semi-supervised learning, the residual propagation can be written as
\begin{equation}
\left[\mathbf R_{t+1}, {\mathbf R'_{t+1}}\right] = -\eta~ \left[\begin{array}{cc}
\underbrace{\blue{\mathbf \Theta_t(\mathbf X, \mathbf X; \mathbf A)}}_{n_l\times n_l} & \mathbf \Theta_t(\mathbf X, \mathbf X'; \mathbf A) \\
\underbrace{\red{\mathbf \Theta_t(\mathbf X', \mathbf X; \mathbf A)}}_{(n-n_l)\times n_l} & \mathbf \Theta_t(\mathbf X', \mathbf X'; \mathbf A)
\end{array}\right] \cdot \underbrace{[{\mathbf R_t}, \mathbf 0]}_{n}  ~~+~~ \underbrace{\left[\mathbf R_t, {\mathbf R'_{t}}\right]}_{n}.
\end{equation}
where $\mathbf \Theta_t(\mathbf X, \mathbf X; \mathbf A)$ is the node-level GNTK defined in Sec.~\ref{sec_gnndynamics}. This equation is equivalent to the one presented in the main text, i.e. (\ref{eqn_inference}).

\paragraph{Inductive learning.} Let us denote $\mathbf A_{\mathbf X\mathbf X}$ the submatrix of $\mathbf A$ corresponding to the training set. For inductive learning or supervised learning, the residual propagation can be written as
\begin{equation}
\left[\mathbf R_{t+1}, {\mathbf R'_{t+1}}\right] = -\eta~ \left[\begin{array}{cc}
\underbrace{\blue{\mathbf \Theta_t(\mathbf X, \mathbf X; \mathbf A_{\mathbf X\mathbf X})}}_{n_l\times n_l} & \mathbf \Theta^{(ind)}_t(\mathbf X, \mathbf X'; \mathbf A) \\
\underbrace{\red{\mathbf \Theta^{(ind)}_t(\mathbf X', \mathbf X; \mathbf A)}}_{(n-n_l)\times n_l} & \mathbf \Theta^{(ind)}_t(\mathbf X', \mathbf X'; \mathbf A)
\end{array}\right] \cdot \underbrace{[{\mathbf R_t}, \mathbf 0]}_{n}  ~~+~~ \underbrace{\left[\mathbf R_t, {\mathbf R'_{t}}\right]}_{n},
\end{equation}
Let us denote predictions (for the training set) given by the GNN model with matrix $\mathbf A_{\mathbf X\mathbf X}$ as $\mathbf F_{ind}$, and the predictions (for the testing set) given by the GNN model with matrix $\mathbf A$ as $\mathbf F'_{trans}$, we have
\begin{equation}
\begin{split}
\mathbf \Theta_t(\mathbf X, \mathbf X; \mathbf A_{\mathbf X\mathbf X}) &= \nabla_{\mathbf W} \mathbf F_{ind}^\top \nabla_{\mathbf W} \mathbf F_{ind},\\
\mathbf \Theta^{(ind)}_t(\mathbf X', \mathbf X; \mathbf A) &= \nabla_{\mathbf W} {\mathbf F'}_{trans}^\top \nabla_{\mathbf W} \mathbf F_{ind}.
\end{split}
\end{equation}
In this case, the $n\times n$ matrix is still a valid kernel matrix with graph implicit bias (cf. Appendix~\ref{app_gntk}), and thus our insights from the transductive setting still holds.

\paragraph{Training without graph.} Another interesting setting studied recently~\citep{yang2022graph} is using vanilla MLP for training and then adopts the GNN architecture in inference, which leads to the so-called PMLP that can accelerate training without deteriorating the generalization performance. In this setting, the residual propagation can be written as 
\begin{equation}
\left[\mathbf R_{t+1}, {\mathbf R'_{t+1}}\right] =  -\eta~ \left[\begin{array}{cc}
\underbrace{\blue{\mathbf \Theta_t(\mathbf X, \mathbf X)}}_{n_l\times n_l} & \mathbf \Theta^{(pmlp)}_t(\mathbf X, \mathbf X'; \mathbf A) \\
\underbrace{\red{\mathbf \Theta^{(pmlp)}_t(\mathbf X', \mathbf X; \mathbf A)}}_{(n-n_l)\times n_l} & \mathbf \Theta^{(pmlp)}_t(\mathbf X', \mathbf X'; \mathbf A)
\end{array}\right] \cdot \underbrace{[{\mathbf R_t}, \mathbf 0]}_{n}  ~~+~~ \underbrace{\left[\mathbf R_t, {\mathbf R'_{t}}\right]}_{n},
\end{equation}
where $\mathbf \Theta_t(\mathbf X, \mathbf X)$ is equivalent to the NTK matrix of fully-connected neural networks. This is equivalent to the inductive setting where the graph adjacency matrix used in training is an identity matrix: let us denote predictions (for the training set) given by MLP as $\mathbf F_{mlp}$, we have
\begin{equation}
\begin{split}
\mathbf \Theta_t(\mathbf X, \mathbf X) &= \nabla_{\mathbf W} \mathbf F_{mlp}^\top \nabla_{\mathbf W} \mathbf F_{mlp},\\
\mathbf \Theta^{(ind)}_t(\mathbf X', \mathbf X; \mathbf A) &= \nabla_{\mathbf W} {\mathbf F'}_{trans}^\top \nabla_{\mathbf W} \mathbf F_{mlp}.
\end{split}
\end{equation}
The success of PMLP is consistent with the insight that the residual flow between training and testing sets with graph implicit bias is important to explain the generalization of GNNs.

\subsection{Other Loss Functions} \label{app_loss}
It is worth noting that residual propagation process is not exclusive to the squared loss. While our analysis focuses on the squared loss, similar residual propagation schemes can be obtained with other loss functions that will change the original linear residual propagation process to non-linear. Therefore, the insight still holds for other loss functions.

\paragraph{Mean squared error.} For completeness, let us rewrite the residual propagation process for general parameterized model (e.g. linear model, fully-connected neural network, GNN) with squared loss here, with derivation given in Appendix~\ref{proof_rp_ntk}
\begin{equation}
\left[\mathbf R_{t+1}, {\mathbf R'_{t+1}}\right] =  -\eta~ \mathbf \Theta_t(\bar{\mathbf X}, \bar{\mathbf X}) [{\mathbf R_t}, \mathbf 0] ~+~ \left[\mathbf R_t, \mathbf R'_{t}\right],
\end{equation}
which is a linear propagation process, where $\mathbf \Theta_t(\bar{\mathbf X}, \bar{\mathbf X}) = \nabla_{\mathbf W} [\mathbf F_t, \mathbf F'_t]^\top ~ \nabla_{\mathbf W} [\mathbf F_t, \mathbf F'_t] \in \mathbb R^{n\times n}$.

\paragraph{Mean absolute error.}
If the loss function is a MAE (a.k.a. $L_1$) loss $\mathcal L = \|\mathbf F_t - \mathbf Y\|_1$, then the residual propagation process can be revised as
\begin{equation}
\left[\mathbf R_{t+1}, {\mathbf R'_{t+1}}\right] =  -\eta~ \mathbf \Theta_t(\bar{\mathbf X}, \bar{\mathbf X}) [\operatorname{sgn}(\mathbf R_t), \mathbf 0] ~+~ \left[\mathbf R_t, \mathbf R'_{t}\right],
\end{equation}
where $\operatorname{sgn}(x)$ is the sign function that output $1$ is $x > 0$ otherwise $-1$, and could be thought of as coarsening the residual information to a binary value. 

\paragraph{Cross-entropy.}
For the CE loss commonly used for classification tasks, we define the residual as $\mathbf R_t = \mathbf Y - \sigma \mathbf F_t$, where $\sigma$ here denotes sigmoid activation and applies on every element in vector $\mathbf F_t$. The residual propagation process can be revised as 
\begin{equation}
\left[\mathbf R_{t+1}, {\mathbf R'_{t+1}}\right] =  -\eta~ \mathbf \Theta_t(\bar{\mathbf X}, \bar{\mathbf X}) [c(\mathbf R_t), \mathbf 0] ~+~ \left[\mathbf R_t, \mathbf R'_{t}\right],
\end{equation}
where the function $c(\cdot)$ denotes multiplying a scaling factor $1/(1-\sigma f_t(\boldsymbol x_i))\sigma f_t(\boldsymbol x_i)$ to each residual for re-weighting. This scaling factor up-weights smaller residuals whose corresponding predictions are more confident, and down-weights larger residuals whose predictions are less confident.

\subsection{Multi-Dimensional Output} \label{app_multioutput}
In this section, we discuss the modification of analysis for training dynamics of GNNs when outputs $f(\boldsymbol x) \in \mathbb R^c$ is multi-dimensional.
For the multi-dimensional output case where $f(\boldsymbol x) \in \mathbb R^{c}$ and $c$ is the dimension of outputs, let $\mathbf F_t \in \mathbb R^{n_l c\times 1}$ and $\mathbf R_t \in \mathbb R^{n_l c\times 1}$ be model prediction and residuals in vector forms. Then, the residual dynamics for the training set (which can straight-forwardly incorporate testing samples similar to the derivation in Appendix~\ref{proof_rp_ntk}) is revised as 
\begin{align}
\frac{\partial \mathbf R_t}{\partial t} &= \frac{\partial \mathbf R_t}{\partial \mathbf W_t} \cdot \frac{\partial \mathbf W_t}{\partial t} && \text{(Chain rule)} \\
&= - \eta\cdot \frac{\partial \mathbf R_t}{\partial \mathbf W_t} \cdot \nabla_{\mathbf W} {\mathcal L} && \text{(GD training)}\\
&= -\eta\cdot \frac{\partial \mathbf R_t}{\partial \mathbf W_t} \cdot \nabla_{\mathbf W} \mathbf F_t\cdot \nabla_{f} \mathcal L && \text{(Chain rule)} \\
&= \eta\cdot \underbrace{\nabla^\top_{\mathcal W} \mathbf F_t}_{{n_l c\times |\mathbf W|}} \cdot \underbrace{\nabla_{\mathbf W} \mathbf F_t}_{{{|\mathbf W|\times n_l c}}}\cdot \underbrace{\nabla_{f} \mathcal L}_{{n_l c\times 1}} && \text{(Change of notation)} \\
&= \eta\cdot \underbrace{\mathbf \Theta_t(\mathbf X, \mathbf X)}_{{n_l c\times n_l c}} \cdot \underbrace{\nabla_{f} \mathcal L}_{{n_l c\times 1}} \\
&= -\eta \cdot \underbrace{\mathbf \Theta_t(\mathbf X, \mathbf X)}_{{n_l c\times n_l c}} \cdot \underbrace{\mathbf R_t}_{{n_l c\times 1}} && \text{(Squared loss)}.
\end{align}

Compared with residual propagation process for scalar output, the residual propagation process for multi-dimensional output additionally incorporate flow of residual across different dimensions. However, for infinite neural networks, the NTK $\mathbf \Theta (\mathbf X, \mathbf X)$ for multi-dimensional output~\citep{jacot2018neural} can be written as
\begin{equation}
\underbrace{\mathbf \Theta (\mathbf X, \mathbf X)}_{{n_l c\times n_l c}} = \underbrace{\bar{\mathbf \Theta} (\mathbf X, \mathbf X)}_{{n_l\times n_l}} ~~\otimes \underbrace{\mathbf I_{c}}_{{c \times c}}.
\end{equation}

\begin{figure}[t]
\centering
\begin{minipage}[t]{\linewidth}
\hspace{5pt}
\begin{minipage}[t]{0.33\linewidth}
\centering
\includegraphics[width=0.99\textwidth]{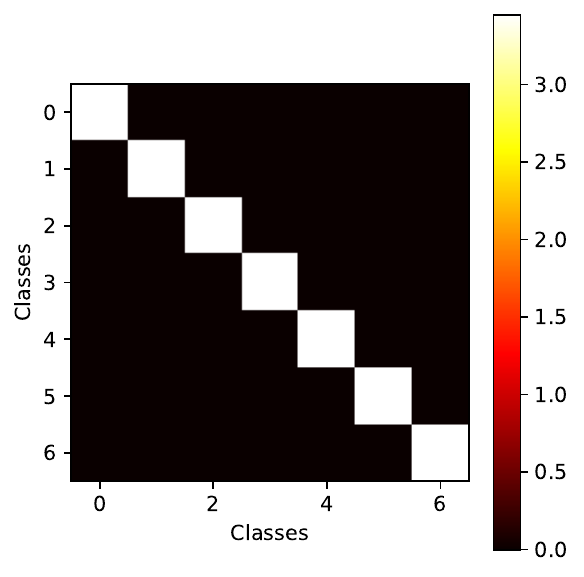}
\end{minipage}%
\begin{minipage}[t]{0.33\linewidth}
\centering
\includegraphics[width=0.99\textwidth]{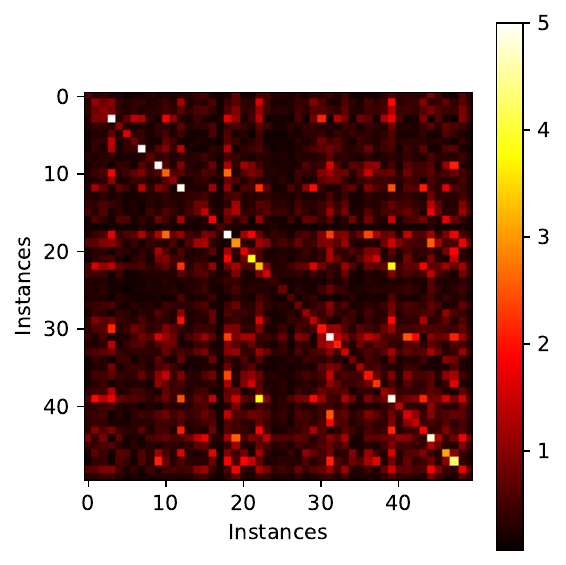}
\end{minipage}%
\begin{minipage}[t]{0.33\linewidth}
\centering
\includegraphics[width=0.99\textwidth]{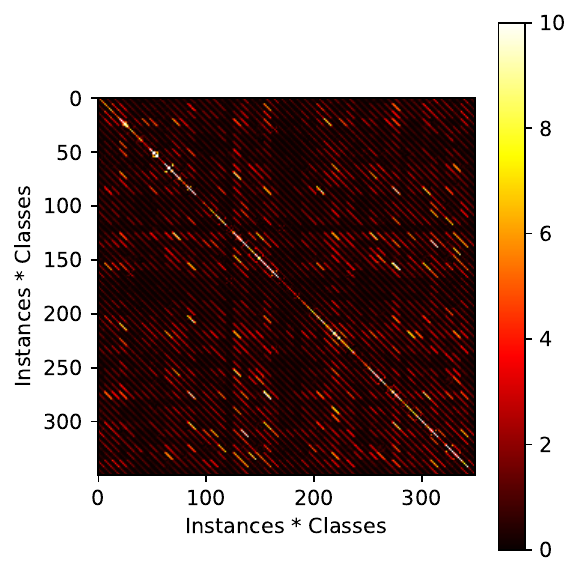}
\end{minipage}%
\end{minipage}
\caption{Visualization of NTK of well-trained GCN on a node classification benchmark (Cora). $50$ nodes are randomly selected for clearity. From left to right are $c\times c$, $n\times n$, $nc\times nc$ NTK matrices, where the former two matrices are obtained by averaging the $nc\times nc$ NTK matrix at dimension $n$ and $c$ respectively. The diagonal patterns in the first and last matrix verifies that our analysis for finitely-wide GNNs in binary classification also applies to multi-class classification setting.}
\label{fig_ntkvis}
\end{figure}

Namely, there is no cross-dimension flow in residual propagation and our analysis for scalar output can be trivially adapted to the multi-dimensional output case. For finite neural networks, while cross-dimension residual propagation is theoretically inevitable, one can still resort the the above decomposition as approximation. For example, we can define pseudo NTK as 
\begin{equation}
\bar{\mathbf \Theta}_t(\mathbf X,\mathbf X)= \left(\nabla_{\mathbf W} \sum_{h=1}^c [\mathbf F_t]_h\right)^\top  \left(\nabla_{\mathbf W} \sum_{h=1}^c [\mathbf F_t]_h\right),
\end{equation}
and recent works~\citep{mohamadi2022fast} has proved that such a kernel function can be used to approximate the original $n_l c\times n_l c$ kernel matrix:
\begin{equation}
\frac{\left\|\bar{\mathbf \Theta}_t\left(\boldsymbol x_i, \boldsymbol x_j\right) \otimes \mathbf I_c - \mathbf \Theta_t\left(\boldsymbol x_i, \boldsymbol x_j\right)\right\|_F}{\left\|\mathbf \Theta_t\left(\boldsymbol x_i, \boldsymbol x_j\right)\right\|_F} \in \tilde{\mathcal{O}}\left(n_l^{-\frac{1}{2}}\right).
\end{equation}
To empirically verify this for finitely-wide GNNs, we visualize the NTK matrix for real-world GNNs in Cora dataset in Fig.~\ref{fig_ntkvis}.
Therefore, our insights for both infinitely-wide GNNs and finitely-wide GNNs still hold in multi-dimensional case, which are also verified by our experiments.

\subsection{Practical Implications for Heterophily}\label{app_heterophily}

The analysis in works can be used to provide high-level design principles for handling heterophily. For instances:

Increasing $A\left(\mathbf{A}, \mathbf \Theta^*\right)$ by designing propagation matrices that deviate from the original adjacency matrix, such that the resulting GNN NTK could better align with the optimal kernel matrix, can lead to better generalization performance. Moreover, this perspective helps to substantiate certain existing heuristics for handling heterophily, such as assigning negative weights to edges; 

Decreasing $A\left(\mathbf \Theta_t, \mathbf \Theta^*\right)$ by designing non-standard GNN architectures, i.e., that deviate from our defintion in Section 2, can reduce the degradation of NTK-target alignment caused by heterophily; in doing so, the negative effects of heterophilic graphs on generalization can also be mitigated. Incidently, this also helps to quantify regimes where non-GNN architectures such as MLPs can sometimes perform better.

We note also that even the extremely simple RP algorithm that naturally emerges from our analysis works well in practice handling heterophily. See Appendix~\ref{app_addexp} for empirical examples.

\section{Implementation Details} \label{app_detail}

\subsection{Experiments in Section~\ref{sec_rp_theorem}} \label{app_detail1}

\texttt{Arxiv}, \texttt{Proteins} and \texttt{Products}~\citep{hu2020open} are three relatively large datasets containing $169343$, $132534$ and $2449029$ nodes respectively. 

\begin{itemize}
    \item The \texttt{Arxiv} dataset represents the citation network between all computer science arxiv papers. Each node is associated with a feature vector representing the averaged embeddings of words in the title and abstract of that paper and the task is to predict the subject areas. 
    \item For \texttt{Proteins} dataset, nodes represent proteins and edges represent biologically significant associations between proteins, categorized by their types. The task is to predict the presence or absence of 112 protein functions as a multi-label binary classification problem. 
    \item The \texttt{Products} dataset is an Amazon product co-purchasing network where nodes are products in Amazon and links represent two products are purchased together. The feature for each node is dimensionality-reduced bag-of-words for the product descriptions and the task is predict the category of a product. 
\end{itemize}

We follow the original splitting of \cite{hu2020open} for evaluation. The statistics of these datasets are shown in Table~\ref{tbl:dataset}.

\begin{table*}[t]
    \centering
    \caption{Statistics of $17$ datasets.}
    \label{tbl:dataset}
    \resizebox{\textwidth}{!}{
    \begin{tabular}{lllllccccc}
    \toprule
    Type & Dataset &  \# Nodes ($n$) & \# Edges ($e$) & \# Features ($d$) & \# Class ($c$) & \# Tasks  & Split\\ 
    \midrule
    \multirow{3}{*}{OGB} & \texttt{Arxiv}  & 169,343 & 1,166,243  & 128  & 40 & 1 & Public\\ 
    &\texttt{Proteins}  & 132,534 & 39,561,252  & 8  & 2 & 112 & Public\\ 
    &\texttt{Products} &  2,449,029 & 61,859,140 & 100 & 47 & 1 & Public\\ \midrule
    \multirow{7}{*}{{Homophilic}} &\texttt{Cora}  & 2,708 & 10,556 & 1,433  & 7 & 1 & Public\\ 
    &\texttt{Citeseer}  & 3,327 & 9,104 & 3,703  & 6 & 1 & Public\\ 
    &\texttt{Pubmed}  & 19,717 & 88,648 & 500  & 3 & 1 & Public\\ 
    &\texttt{Computers}  & 13,752 & 491,722 & 767  & 10 & 1 & 80\%/10\%/10\%\\ 
    &\texttt{Photo}  & 7,650 & 238,162 & 745 & 8 & 1 & 80\%/10\%/10\%\\ 
    &\texttt{CS}  & 18,333 & 163,788 & 6,805  & 15 & 1 & 80\%/10\%/10\%\\ 
    &\texttt{Physics}  & 34,493 & 495,924 & 8,415  & 5 & 1 & 80\%/10\%/10\%\\ \midrule
    \multirow{7}{*}{{Heterophilic}}&\texttt{roman-empire}  & 22662 & 32927  & 300  & 18 & 1 & Public\\
    &\texttt{amazon-ratings}  & 24,492 & 93,050  & 300  & 5 & 1 & Public\\
    &\texttt{minesweeper}  & 10,000 & 39,402  & 7  & 2 & 1 & Public\\
    &\texttt{tolokers}  & 11,758 & 519,000  & 10  & 2 & 1 & Public\\
    &\texttt{questions}  & 48,921 & 153,540  & 301  & 2 & 1 & Public\\
    &\texttt{Texas}  & 183 & 325  & 1,703  & 5 & 1 & Public\\ \midrule
    \multirow{1}{*}{{Synthetic}} &\texttt{Synthetic} &  2,000 & 8,023 $\sim$ 8,028  & 100 & 5 & 1 & 100/500/1000\\ \bottomrule
    \end{tabular}
    }
\end{table*}

We compare GRP with several classic methods for learning on graphs: standard \emph{MLP}, Label Propagation (\emph{LP})~\citep{zhu2003semi}, \emph{LinearGNN} (SGC)~\citep{wu2019simplifying}, \emph{GNN (GCN)}~\citep{kipf2016semi}. Except the results of linear GNN which are from our reproduction, the results of other baselines align with the results reported in the OGB leaderboard, where the detailed implementation and hyperparameter setup can be found.~\footnote{\url{https://ogb.stanford.edu/docs/leader_nodeprop}} The LP algorithm reported in Tab.~\ref{tbl_bench} follows the standard implementation that is ran until convergence, while the LP algorithm reported in Fig.~\ref{fig_rpcurve} does not run until convergence in order to align with the proposed RP algorithm. 

For hyperparamter search of RP, we adopt grid search for the RP algorithm with the step size $\eta$ from $\{0.01, 0.02, 0.05, 0.1, 0.2, 0.5, 1\}$, the power $K$ ranging from $1$ to $10$. For \texttt{Arxiv}, \texttt{Proteins} and \texttt{Products}, $K$ is chosen as $7$, $1$, $8$ respectively. Since both LP and GRP are deterministic algorithms, their standard deviations are $0$. All experiments are conducted on Quadro RTX 8000 with 48GB memory.

\subsection{Experiments in Section~\ref{sec_exp4theory}}   \label{app_detail2}
We conduct experiments on real-world benchmark datasets \texttt{Cora} and \texttt{Texas}, and a synthetic dataset generated by the stochastic block model. For the synthetic dataset, we set number of blocks as $5$ (i.e. number of classes) with each block having $400$ nodes. Each node is associated with a $100$-dimensional informative input feature vector. For the homophilic version of the dataset, the $5\times 5$ edge probability matrix is defined as $\mathbf P = 0.01 \cdot\mathbf I_5$, i.e. nodes in the same block are connected with probability $0.01$, and we gradually change this matrix in the generation process until there are only heterophilic edges left in the dataset, i.e. $\mathbf P = 0.0025 \cdot \mathbf 1 \mathbf 1^\top - 0.0025 \cdot \mathbf I_5$. The statistics of these datasets are shown in  Table~\ref{tbl:dataset}. Note that we do not consider the large-scale datasets as in Section~\ref{sec_rp_theorem}, since the computing NTK matrix is extremely costly in memory, especially for GNNs in node-level tasks where the output for an instance is also related to input features of other instances and mini-batch partitioning can not be directly adopted.

For the model, we choose a two-layer GCN with width $16$, bias term and ReLU activation for all datasets. In order to control the variable, when comparing NTK of the model using different graph structures, we fix the weights in the model, which is achieved by training on a fixed graph structure but evaluate the NTK matrix (and test performance) using different graph structure. For real-world datasets, we use the original graph for training which is equivalent to the standard training, and for synthetic dataset, we use an identity matrix for training in which case the model is equivalent to the recently proposed PMLP~\citep{yang2022graph} model that has shown to as effective as GNNs in the supervised learning setting. The optimization algorithm is gradient descent with learning rates $1e-2$, $3e-4$, $5e-5$ respectively for \texttt{Cora}, \texttt{Texas}, \texttt{Synthetic} respectively, momentum $0.9$ and weight decay $5e-4$. The loss function is the standard cross-entropy loss for multi-class classification.

Since the NTK matrix for multi-dimensional output is a $nc\times nc$ matrix, where $c$ is the output dimension, we follow prior work (see \citet{mohamadi2022fast} and references therein) and compute the $n\times n$ NTK matrix by averaging over the dimension $c$. The graph adjacency matrix we consider here is defined as $\mathbf A^{4}$ in order to align with our theoretical result for two-layer infinitely-wide GNN in Theorem~\ref{thm_twolayergnn}. We also normalize these matrices before computing the alignment following~\citep{cortes2012algorithms,baratin2021implicit} as a standard way of preprocess.

\subsection{Algorithm Description} \label{app_detail3}
Recall the basic version of RP can be described by the following iterative forward propagation process:
\begin{equation} 
\left[\mathbf R_{t+1}, \mathbf R'_{t+1}\right] =  -\eta \mathbf A^{K}  [\mathbf R_t, \mathbf 0]  + \left[\mathbf R_t, \mathbf R'_{t}\right].
\end{equation}
The intial conditions are
\begin{equation}
\left[\mathbf R_0, \mathbf R'_{0}\right] = \left[\mathbf Y - \mathbf F_0, \mathbf 0 - \mathbf F'_0\right],
\end{equation}
where $\mathbf F_0 = \mathbf 0$ and $\mathbf F'_0 = \mathbf 0$.
For real-world applications where the outputs are usually multi-dimensional, say, multi-class classification with $c$ classes, we define: $\mathbf Y = \{\boldsymbol y_i\}_{i=1}^{n_l} \in \mathbb R^{n_l\times c}$ where $\boldsymbol y_i$ is a onehot vector, $\mathbf A \in \mathbb R^{n\times n}$ is the same standard normalized adjacency matrix in GCN~\citep{kipf2016semi}. The pseudo code for the basic version of RP are shown in the following algorithm.

\begin{algorithm}[H]
    \caption{Basic version of residual propagation.}
    \KwIn{Raw graph adjacency matrix ${\mathbf A}$, ground-truth labels for training samples $\mathbf Y$, step size $\eta$, power $K$.}
     Compute the normalized graph adjacency matrix by $\mathbf A \gets {\mathbf D}^{-\frac{1}{2}} ({\mathbf A + \mathbf I}) {\mathbf D}^{-\frac{1}{2}}$ \\
     Initialize $\mathbf R_0 \gets \mathbf Y \in \mathbb R^{n_l\times c}$\\
     Initialize $\mathbf R'_0 \gets \mathbf 0 \in \mathbb R^{(n-n_l)\times c}$\\
    \While{Validation performance increases}{
        Label/residual propagation on the graph:\\
        $\tilde{\mathbf R}_{t-1} \gets [\mathbf R_{t-1}, \mathbf 0] \in \mathbb R^{n\times c}$\\
        \For{$i \gets 1$ to $K$}{
            $\tilde{\mathbf R}_{t-1} \gets \mathbf A ~\tilde{\mathbf R}_{t-1}$\\
        }
        Update residuals $[\mathbf R_t, \mathbf R'_t] \gets [\mathbf R_{t-1}, \mathbf R'_{t-1}] - \eta \tilde{\mathbf R}_{t-1}$
    }
    Output prediction for testing samples ${\mathbf F'} =  - \mathbf R'$
\end{algorithm}

\section{Additional Experiments}\label{app_addexp}
\subsection{Incorporating Input Features} \label{app_addexp1}
For smaller datasets where node features are often more useful (i.e. the dimension of node features is closer to the size of dataset, for example in \texttt{Citeseer}, the node feature dimension is even larger than the size of dataset), we consider the following generalized RP algorithm that combines kernel methods to leverage node feature information in the propagation process
\begin{equation} \label{eqn_rpfeat}
\left[\mathbf R_{t+1}, \mathbf R'_{t+1}\right] =  -\eta \mathbf A^{K} \mathbf K(\bar{\mathbf X}, \bar{\mathbf X})\mathbf A^{K}  [\mathbf R_t, \mathbf 0]  + \left[\mathbf R_t, \mathbf R'_{t}\right], \text{~~where~} \mathbf R_0 = \mathbf Y, \mathbf R'_0 = \mathbf 0,
\end{equation}
where $\mathbf K(\bar{\mathbf X}, \bar{\mathbf X}) \in \mathbb R^{n\times n}$ could be specified as arbitrary kernel functions (such as Sigmoid kernel, Gaussian kernel, etc.) that is applied to compute pairwise similarities. This variant of RP could also be treated as propagation on kernel's RKHS, i.e.
\begin{equation}
\mathbf A^{K} \mathbf K(\bar{\mathbf X}, \bar{\mathbf X})\mathbf A^{K} = (\mathbf A^{K} \mathbf K(\bar{\mathbf X}, \cdot) ) (\mathbf A^{K} \mathbf K(\bar{\mathbf X}, \cdot) )^\top,
\end{equation}
which is impossible to directly implement in practice but can be achieved by our proposed RP.

With this implementation, RP can still run efficiently by treating the computation of the kernel matrix as a part of data prepossessing. Specifically, we will test Gaussian kernel which is defined as
\begin{equation}
\mathbf K(\boldsymbol x_i, \boldsymbol x_j) = \exp \left(-\frac{\left\|\boldsymbol{x_i}-\boldsymbol{x_j}\right\|^2}{2 \sigma^2}\right),
\end{equation}
while in practice one can treat the kernel function as a hyperparameter to tune for even better performance. The pseudo code for this version of RP are shown in the following algorithm.

\begin{algorithm}[H]
    \caption{Generalized residual propagation with kernel functions.}
    \KwIn{Raw graph adjacency matrix ${\mathbf A}$, ground-truth labels for training samples $\mathbf Y$, input features $\mathbf X$ and $\mathbf X'$, step size $\eta$, power $K$.}
     Compute the normalized graph adjacency matrix by $\mathbf A \gets {\mathbf D}^{-\frac{1}{2}} ({\mathbf A + \mathbf I}) {\mathbf D}^{-\frac{1}{2}}$ \\
     Compute the kernel matrix based on input features $\mathbf K([\mathbf X, \mathbf X'], [\mathbf X, \mathbf X'])$\\
     Initialize $\mathbf R_0 \gets \mathbf Y \in \mathbb R^{n_l\times c}$\\
     Initialize $\mathbf R'_0 \gets \mathbf 0 \in \mathbb R^{(n-n_l)\times c}$\\
    \While{Validation performance increases}{
        Label/residual propagation on the graph:\\
        $\tilde{\mathbf R}_{t-1} \gets [\mathbf R_{t-1}, \mathbf 0] \in \mathbb R^{n\times c}$\\
        \For{$i \gets 1$ to $K$}{
            $\tilde{\mathbf R}_{t-1} \gets \mathbf A ~\tilde{\mathbf R}_{t-1}$\\
        }
        $\tilde{\mathbf R}_{t-1} \gets \mathbf K([\mathbf X, \mathbf X'], [\mathbf X, \mathbf X']) ~\tilde{\mathbf R}_{t-1}$\\
        \For{$i \gets 1$ to $K$}{
            $\tilde{\mathbf R}_{t-1} \gets \mathbf A ~\tilde{\mathbf R}_{t-1}$\\
        }
        Update residuals $[\mathbf R_t, \mathbf R'_t] \gets [\mathbf R_{t-1}, \mathbf R'_{t-1}] - \eta \tilde{\mathbf R}_{t-1}$
    }
    Output prediction for testing samples ${\mathbf F'} =  - \mathbf R'$
\end{algorithm}

\subsection{Homophilic Datasets} \label{app_addexp2}
To evaluate the generalized RP algorithm in (\ref{eqn_rpfeat}), we experiment on $7$ more (smaller) datasets: \texttt{Cora}, \texttt{Citeseer}, \texttt{Pubmed}, \texttt{Computer}, \texttt{Photo}, \texttt{CS}, \texttt{Physics}. For \texttt{Cora}, \texttt{Citeseer}, \texttt{Pubmed}, we follow the public split, while for other datasets, we randomly split them into training/validation/testing sets based on ratio $8/1/1$. Statistics of these datasets are reported in Table.~\ref{tbl:dataset}. For RP, we tune the hyperparameters $K$ and $\sigma$. For baselines~\citep{wu2019simplifying,kipf2016semi,xu2018representation,klicpera2018predict}, we tune the hyperparameters provided in their original paper and report mean and standard deviation of testing accuracy with $20$ different runs. 

The results are reported in Table.~\ref{tbl_smallbench}. We found the proposed RP almost always achieves the best or second best performance, and outperforms GCN in $6$ out of $7$ datasets even using no learnable parameters. In terms of the average performance, RP achieved the highest ranking out of all popular GNN models considered. Better performance can potentially be achieved by considering more advanced kernel functions or other specialized propagation matrices.

\subsection{Heterophilic Datasets} \label{app_addexp3}
For a comprehensive evaluation, we further consider $5$ heterophilic benchmarks \texttt{roman-empire}, \texttt{amazon-ratings}, \texttt{minesweeper}, \texttt{tolokers}, \texttt{questions} from a recent paper \citet{platonov2023critical}, which has addressed some drawbacks of existing datasets used for evaluating models designed specifically for heterophily. We use 10 existing standard train/validation/test splits provided in their paper, and statistics of these datasets are also reported in Table.~\ref{tbl:dataset}. Baselines are recently proposed strong GNN models that are carefully designed to tackle the heterophily problem. 

The results are reported in Table.~\ref{tbl_hetebench}. We observed that the proposed RP maintains a strong level of performance when compared to these meticulously designed models, surpassing $6$ out of the $9$ in terms of average performance. It is important to note that RP has been proven to be suboptimal when applied to heterophilic graphs and is orthogonal to various other techniques designed for addressing this challenge. These observations suggest that there is significant untapped potential for further improvement of the algorithm to address this issue.

\begin{table}[t!]
\centering
\caption{Performance of generalized RP on homophilic datasets. We mark the first and second place with gold and silver, and compare its performance with GCN by $\Delta_{GCN}$.}  \label{tbl_smallbench}
\resizebox{\textwidth}{!}{
\begin{tabular}{@{}c|cccccccc@{}}
\toprule
\textbf{Model} &  \texttt{Cora} & \texttt{Citeseer} & \texttt{Pubmed} & \texttt{Computer} &  \texttt{Photo} &  \texttt{CS} &  \texttt{Physics} & Avg.\\
\midrule
MLP & 59.7 $\pm$ 1.0 & 57.1 $\pm$ 0.5 & 68.4 $\pm$ 0.5 & 85.42 $\pm$ 0.51 & 92.91 $\pm$ 0.48 & 95.97 $\pm$ 0.22 & \cellcolor{Goldenrod!40}{96.90 $\pm$ 0.2}7& 79.49\\
SGC &  81.0 $\pm$ 0.5 & \cellcolor{gray!30}{71.9 $\pm$ 0.5} & 78.9 $\pm$ 0.4 & 89.92 $\pm$ 0.37 & 94.35 $\pm$ 0.19 & 94.00 $\pm$ 0.30 & 96.19 $\pm$ 0.13 & 86.61\\
GCN & 81.9 $\pm$ 0.5 & 71.6 $\pm$ 0.4 & 79.3 $\pm$ 0.3 &  \cellcolor{Goldenrod!40}{92.25 $\pm$ 0.61} & 95.16 $\pm$ 0.92 & 94.10 $\pm$ 0.34 & 96.64 $\pm$ 0.36 & 87.28\\
JKNet & 81.3 $\pm$ 0.5 & 69.7 $\pm$ 0.2 & 78.9 $\pm$ 0.6 & 91.25 $\pm$ 0.76 & 94.82 $\pm$ 0.22 & 93.57 $\pm$ 0.49 & 96.31 $\pm$ 0.29 & 86.55\\
APPNP & \cellcolor{gray!30}{82.6 $\pm$ 0.2} & 71.7 $\pm$ 0.5 & \cellcolor{Goldenrod!40}{80.3 $\pm$ 0.1} & 91.81 $\pm$ 0.78 & \cellcolor{Goldenrod!40}{95.84 $\pm$ 0.34} & \cellcolor{gray!30}{94.41 $\pm$ 0.29} & \cellcolor{gray!30}{96.84 $\pm$ 0.26} & \cellcolor{gray!30}{87.64}\\
\midrule
RP (Ours) & \cellcolor{Goldenrod!40}{82.7 $\pm$ 0.0} & \cellcolor{Goldenrod!40}{73.0 $\pm$ 0.0} & \cellcolor{gray!30}{80.1 $\pm$ 0.0} & \cellcolor{gray!30}{92.00 $\pm$ 0.00} & \cellcolor{gray!30}{95.55 $\pm$ 0.00} & \cellcolor{Goldenrod!40}{94.60 $\pm$ 0.00} & 96.75 $\pm$ 0.00 & \cellcolor{Goldenrod!40}{87.81}\\
$\Delta_{\text{GCN}}$ & \red{+ 0.8} & \red{+ 1.4} & \red{+ 1.2} & \blue{- 0.25} & \red{+ 0.39} & \red{+ 0.19} & \red{+ 0.11} & \red{+ 0.53}\\
\bottomrule
\end{tabular}}
\end{table}

\begin{table}[t]
\centering
\caption{Performance of generalized RP on heterophilic datasets. The first three datasets use Accuracy, and the last two datasets use ROC-AUC. We report the ranking of RP among all baselines.}  \label{tbl_hetebench}
\resizebox{\textwidth}{!}{
\setlength{\tabcolsep}{2mm}{
\begin{tabular}{@{}c|cccccc@{}}
\toprule
\textbf{Model} &  {roman-empire} & {amazon-ratings} & {minesweeper} & {tolokers}&  {questions} &   Avg.\\
\midrule
ResNet & 65.88 $\pm$ 0.38 & 45.90 $\pm$ 0.52 & 50.89 $\pm$ 1.39 & 72.95 $\pm$ 1.06 & 70.34 $\pm$ 0.76 & 61.39\\
H2GCN~\citep{zhu2020beyond} & 60.11 $\pm$ 0.52 & 36.47 $\pm$ 0.23 & 89.71 $\pm$ 0.31 & 73.35 $\pm$ 1.01 & 63.59 $\pm$ 1.46 & 64.64\\
CPGNN~\citep{zhu2021graph} & 63.96 $\pm$ 0.62 & 39.79 $\pm$ 0.77 & 52.03 $\pm$ 5.46 & 73.36 $\pm$ 1.01 & 65.96 $\pm$ 1.95 & 59.02 \\
GPR-GNN~\citep{chien2020adaptive} & 64.85 $\pm$ 0.27 & 44.88 $\pm$ 0.34 & 86.24 $\pm$ 0.61 & 72.94 $\pm$ 0.97 & 55.48 $\pm$ 0.91 & 64.88 \\
FSGNN~\citep{maurya2022simplifying} & 79.92 $\pm$ 0.56 & 52.74 $\pm$ 0.83 & 90.08 $\pm$ 0.70 & 82.76 $\pm$ 0.61 & 78.86 $\pm$ 0.92 & 76.87 \\
GloGNN~\citep{li2022finding} & 59.63 $\pm$ 0.69 & 36.89 $\pm$ 0.14 & 51.08 $\pm$ 1.23 & 73.39 $\pm$ 1.17 & 65.74 $\pm$ 1.19 & 57.35\\
FAGCN~\citep{bo2021beyond} & 65.22 $\pm$ 0.56 & 44.12 $\pm$ 0.30 & 88.17 $\pm$ 0.73 & 77.75 $\pm$ 1.05 & 77.24 $\pm$ 1.26 & 70.50\\
GBK-GNN~\citep{du2022gbk} & 74.57 $\pm$ 0.47 & 45.98 $\pm$ 0.71 & 90.85 $\pm$ 0.58 & 81.01 $\pm$ 0.67 & 74.47 $\pm$ 0.86 & 73.58 \\
JacobiConv~\citep{wang2022powerful} & 71.14 $\pm$ 0.42 & 43.55 $\pm$ 0.48 & 89.66 $\pm$ 0.40 & 68.66 $\pm$ 0.65 & 73.88 $\pm$ 1.16 & 69.38\\
\midrule
RP (Ours) & 66.01 $\pm$ 0.56 & 47.95 $\pm$ 0.57 & 80.48 $\pm$ 0.76 & 78.05 $\pm$ 0.90 & 76.39 $\pm$ 1.16 & 69.78\\
\textbf{Ranking} & \textbf{4 / 10} & \textbf{2 / 10} & \textbf{7 / 10} & \textbf{3 / 10} & \textbf{3 / 10} & \textbf{4 / 10} \\
\bottomrule
\end{tabular}}}
\end{table}


\end{document}